\def\eqref#1{equation~\ref{#1}}
\def\1{\bm{1}}
\DeclareMathAlphabet{\mathsfit}{\encodingdefault}{\sfdefault}{m}{sl}
\SetMathAlphabet{\mathsfit}{bold}{\encodingdefault}{\sfdefault}{bx}{n}
\DeclareMathOperator*{\argmax}{arg\,max}
\theoremstyle{plain}
\newtheorem{theorem}{Theorem}[section]
\newtheorem{proposition}[theorem]{Proposition}
\newtheorem{lemma}[theorem]{Lemma}
\theoremstyle{definition}
\newtheorem{definition}[theorem]{Definition}
\newtheorem{claim}[theorem]{Claim}
\theoremstyle{remark}
\title{Explaining and Improving Information Complementarities in Multi-Agent Decision-making}
\author{%
Ziyang Guo\\
Northwestern University\\
\texttt{ziyang.guo@northwestern.edu} \\
\And
Yifan Wu\\
Microsoft Research\\
\texttt{yifan.wu2357@gmail.com} 
\And
    Jason Hartline \\ 
    Northwestern University \\
    \texttt{hartline@northwestern.edu} \\
    \And
    Jessica Hullman \\ 
    Northwestern University\\
    \texttt{jhullman@northwestern.edu}
}
\newcommand{\mvspace}[1]{\vspace{#1}}
\newcommand{\revisionmark}[1]{#1}
\definecolor{aidecision}{RGB}{217,95,2}
\definecolor{humandecision}{RGB}{117,112,179}
\definecolor{humanaidecision}{RGB}{27,158,119}
\newcommand{\payoffstatevalue}{\omega}
\newcommand{\payoffstatespace}{\mathbf{\Omega}}
\newcommand{\dgp}{\pi}
\newcommand{\action}{d}
\newcommand{\actionvar}{D}
\newcommand{\actionspace}{\mathbf{D}}
\newcommand{\score}{S}
\newcommand{\prob}[2][]{\text{\bf Pr}\ifthenelse{\not\equal{}{#1}}{_{#1}}{}\![{\def\givenn{\middle|}#2}]}
\newcommand{\expect}[2][]{\text{\bf E}\ifthenelse{\not\equal{}{#1}}{_{#1}}{}\![{\def\givenn{\middle|}#2}]}
\DeclareMathOperator{\ACIV}{ACIV}
\DeclareMathOperator{\IV}{IV}
\DeclareMathOperator{\ILIV}{ILIV}
\newcommand{\feature}{x}
\newcommand{\pred}{y}
\newcommand{\shapleyval}{\phi}
\newcommand{\sig}{V}
\newcommand{\sigsp}{\mathbf{V}}
\newcommand{\sigval}{v}
\newcommand{\basicsigsp}{\mathbf{\Sigma}}
\newcommand{\basicsig}{\Sigma}
\newcommand{\basicsigval}{\sigma}
\newcommand{\kink}{\mu}
\newcommand{\cellb}[3]{
  \begin{tabular}{@{}c@{}}
    $#1$\\$[#2, #3]$
  \end{tabular}
} %
\begin{document}

\maketitle

\begin{abstract}
   Multiple agents are increasingly combined to make decisions with the expectation of achieving \textit{complementary performance}, where the decisions they make together outperform those made individually. 
   However, knowing how to improve the performance of collaborating agents requires knowing what information and strategies each agent employs. 
   With a focus on human-AI pairings, we contribute a decision-theoretic framework for characterizing the value of information.
   By defining complementary information, our approach identifies opportunities for agents to better exploit available information in AI-assisted decision workflows.
   We present a novel explanation technique (ILIV-SHAP) that adapts SHAP explanations to highlight human-complementing information.
   We validate the effectiveness of our framework and ILIV-SHAP through a study of human-AI decision-making, and demonstrate the framework on examples from chest X-ray diagnosis and deepfake detection.
   We find that presenting ILIV-SHAP with AI predictions leads to reliably greater reductions in error over non-AI assisted decisions more 
   than vanilla SHAP\footnote{The code to calculate the main quantities in our framework and reproduce the experimental results is available at \url{https://osf.io/p2qzy/?view_only=bf39de5d96f047f69e45ffd42689ebf9}.}.
\end{abstract}

\section{Introduction}
\mvspace{-2mm}
As the performance of artificial intelligence (AI) models continues to improve across domains,  workflows in which human experts and AI models are paired for decision-making are sought in medicine, finance, and law, among others. 
Statistical models can often exceed the accuracy of human experts on average~\citep{aegisdottir2006meta,grove2000clinical,meehl1954clinical}. 
However, whenever humans have access to additional information over an AI model, there is potential to achieve \textit{complementary performance} by pairing the two, i.e., better performance than either the human or AI alone.
For example, a physician may have access to information that is not captured in structured health records \citep{alur2024distinguishing}.

Many empirical studies, however, have found that human-AI teams underperform the AI alone~\citep{buccinca2020proxy, bussone2015role, green2019principles, jacobs2021machine, lai2019human, vaccaro2019effects, kononenko2001machine}. \revisionmark{Two sources of ambiguity complicate such results. One concerns the role of measurement: performance is often scored against post-hoc decision accuracy~\citep{passi2022overreliance} rather than accounting for the best achievable performance given information available at the time of the decision~\citep{kleinberg2015prediction, guo2024decision,rambachan2024identifying}. 
Additionally, it often remains unclear how agents differed in their information access or use, making it difficult to design interventions to improve these aspects.}


Imagine one could identify information complementarities that can be exploited, such as when one of the agents has access to information not contained in the other's judgments, or has not fully integrated contextually-available information (e.g., instance features) in their judgments. This would motivate interventions to improve decision-making. For example, if we can identify how much complementary information AI predictions provide over human judgments, we can use this knowledge to guide model selection or motivate further data collection to improve the model. Conversely, finding evidence that model predictions contain decision-relevant information that humans do not exploit can motivate the design of explanations communicating complementary information.

We contribute a decision-theoretic framework for characterizing the value of information available in an AI-assisted decision workflow.
In our framework, information is considered valuable to a decision-maker to the extent that it is possible, in theory, to incorporate it into their decisions to improve performance. 
Specifically, our approach analyzes the expected marginal payoff gain from best case (Bayes rational) use of additional information over best case use of the information already encoded in agent decisions.
The rational framework allows us to upperbound the expected payoff that is achievable by any strategies in the same experiment, and identifies sub-optimality in agent use of information by comparing to rational behavior.
The upper bound our framework estimates holds regardless of the human's decision-making process, which may deviate from rationality, given a specified decision problem.
Further, our methods can be used even when the decision problem definition is ambiguous, by using a robustness analysis over all possible proper scoring rules to identify the upper bound of performance in the worst case.

We introduce two metrics for evaluating information value in human-AI collaboration.
The first---global human-complementary information value---calculates the value of a new piece of information to an agent over all of its possible realizations among all instances.
The second---instance-level human-complementary information value---identifies opportunities for decision-makers to better use instance-level information such as specific AI predictions.
Applying the second metric, we derive a new explanation technique (ILIV-SHAP) that reveals how data features influence the value of complementary information for an individual prediction. 

To evaluate these tools, we contribute the results of a crowdsourced between-subjects experiment in which humans make decisions with and without AI models with varying human-complementary information and different explanation approaches.
We find that an AI model with more human-complementary information leads to greater improvements in human-AI team performance over the human-alone baseline. 
We also find that adding an ILIV-SHAP explanation to a traditional SHAP explanation leads to greater improvements in human-AI performance over the human-alone baseline than only the SHAP explanation or no explanation. 
We demonstrate further uses of the framework in real-world decision-making tasks, including chest X-ray diagnosis~\citep{rajpurkar2018deep, johnson2019mimic} and deepfake detection~\citep{dolhansky2020deepfake, groh2022deepfake}\footnote{We also include an observational study on the dataset from \citet{vodrahalli2022humans} which identified the AI model with more human-complementary information helps human-AI teams achieve greater improvements in performance over the human-alone baseline. See \Cref{app:observational}.}.

\mvspace{-3mm}

\section{Related work}
\mvspace{-2mm}
\paragraph{Human-AI complementarity.}

Many empirical studies of human-AI collaboration focus on AI-assisted human decision-making for legal, ethical, or safety reasons~\citep{bo2021toward, boskemper2022measuring, bondi2022role, schemmer2022meta}.
However, a recent meta-analysis by \citet{vaccaro2024combinations} finds that, on average, human–AI teams perform worse than the better of the two agents alone. 
In response, a growing body of work seeks to evaluate and enhance complementarity in human–AI systems \citep{bansal2021does, bansal2019updates, bansal2021most, wilder2021learning, rastogi2023taxonomy, mozannar2024effective}.
The present work differs from much of these prior works by approaching human-AI complementarity from the perspective of information value and use, including asking whether the human and AI decisions provide additional information that is not used by the other.
\mvspace{-4mm}
\paragraph{Evaluation of human decision-making with machine learning.}
Our work contributes methods for evaluating the decisions of human-AI teams~\citep{kleinberg2015prediction, kleinberg2018human, lakkaraju2017selective, mullainathan2022diagnosing,  rambachan2024identifying, guo2024decision, ben2024does, shreekumar2025x}.
\citet{kleinberg2015prediction} proposed that evaluations of human-AI collaboration should be based on the information that is available at the time of the decision.
According to this view, our work defines Bayesian best-attainable-performance benchmarks similar to several prior works~\citep{guo2024decision, wu2023rational,agrawal2020scaling, fudenberg2022measuring}. 
Closest to our work, \citet{guo2024decision} model the expected performance of a rational Bayesian agent faced with deciding between the human and AI recommendations as the theoretical upper bound on the expected performance of any human-AI team.
This benchmark provides a basis for identifying exploitable information within a decision problem.

\mvspace{-4mm}
\paragraph{Complementarity by design.}

Some approaches focus on automating the decision pipeline by explicitly incorporating human expertise in developing machine learning models or human-AI collaboration pipeline, such as by learning to defer~\citep{mozannar2024show, madras2018predict, raghu2019algorithmic, keswani2022designing, keswani2021towards, okati2021differentiable, chen2022machine}.
\citet{corvelo2023human} propose multicalibration over human and AI model confidence information to guarantee the existence of an optimal monotonic decision rule.
\revisionmark{Other approaches exploit information asymmetry (i.e., cases where humans have additional contextual knowledge) and offer principled methods with provable guarantees to improve team decision performance~\citep{straitouri2023improving, de2024towards, arnaiz2025towards}.
\citet{alur2024distinguishing} propose a framework to incorporate human decisions into machine learning algorithms when the state is indistinguishable from the algorithm alone but can be discriminated by humans.
Our work also concerns information asymmetry, but provides an interpretable analytical framework for quantifying the information value of all available signals and agent decisions in human–AI decision tasks, enabling the design of information-based interventions.} 

\mvspace{-3mm}
\section{Framework}

\mvspace{-2mm}
Our framework takes as input a decision problem associated with an information model, including decisions from one or more agents. It outputs the value of information of available signals to the agents, conditioning on the existing information in their decisions. 
Understanding the possible value of a signal to a decision-maker requires defining the best attainable decision performance with that signal. We therefore adopt a Bayesian decision theoretic framework.

Our framework provides two separate functions to quantify the value of information: one globally across the data-generating process, and one locally in a realization.


\mvspace{-4mm}
\paragraph{Decision Problem.} A decision problem consists of three key elements. We illustrate with an example of a weather decision. 
\mvspace{-1mm}
\begin{itemize}[wide]
    \item A payoff-relevant state $\payoffstatevalue$ from a space $\payoffstatespace$. For example,\ $\payoffstatevalue \in \payoffstatespace =  \{0, 1\} = \{\text{no rain}, \text{rain}\}$.
    \item A decision $\action$ from the decision space $\actionspace$ characterizing the decision-maker (DM)'s choice. For example,\ $\action\in \actionspace = \{0, 1\} = \{\text{not take umbrella}, \text{take umbrella}\}$.
    \item A payoff function $\score: \actionspace\times\payoffstatespace\to\mathbb{R}$, used to assess the quality of a decision given a realization of the state. For example, $\score(\action = 0, \payoffstatevalue = 0) = 0, \score(\action = 0, \payoffstatevalue = 1) = -100, \score(\action = 1, \payoffstatevalue = 0) = -50, \score(\action = 1, \payoffstatevalue = 1) = 0$, which punishes the DM for wrongly taking or not taking the umbrella. 
\end{itemize}
\mvspace{-1mm}

\mvspace{-4mm}
\paragraph{Information Model.} 
We cast the information available to a DM,including any available agent decisions, as a set of signals defined within an information model. 
Following the definition of an information model in \citet{blackwell1951comparison}, the information model can be represented by a \textit{data-generating model} with a set of \textit{signals}.
\begin{itemize}[wide]
    \mvspace{-3mm}
    \item \textit{Signals}. There are $n$ ``basic signals'' represented as random variables $\basicsig_1, \ldots, \basicsig_n$, from the signal spaces $\basicsigsp_1, \ldots, \basicsigsp_n$. Basic signals represent information available to a decision-maker as they decide, e.g., $\basicsigsp_1 = \Delta \payoffstatespace$ for a probabilistic prediction of raining, $\basicsigsp_2 = \{\text{cloudy}, \text{not cloudy}\}$, $\basicsigsp_3= \{0, \ldots, 100\}$ for temperature in Celsius, \revisionmark{$\basicsigsp_4 = \actionspace$ for human decisions $\actionvar^H$ on taking umbrella or not,} etc.
    The decision-maker observes a signal, which is a subset of the basic signals, $\sig \subseteq 2^{\{\basicsig_1, \dots, \basicsig_n\}}$. 
    The combination of two signals $\sig_1$ and $\sig_2$ takes the set union $\sig = \sig_1\cup\sig_2$.
    In the standard human-AI decision workflow where a human makes an independent judgment before consulting the AI, all basic signals are $\{\feature, \actionvar^\text{H}, \actionvar^\text{AI}\}$--the features of the instance, the human's initial decisions and the AI's predictions.
     \mvspace{-1mm}
    \item \textit{Data-generating process}. Decision benchmarks are defined relative to a specific data-generating process, a joint distribution $\dgp\in \Delta(\basicsigsp_1 \times \ldots \times \basicsigsp_n \times\payoffstatespace)$ over the basic signals and the payoff-relevant state. $\dgp$ can be viewed as the combination of two distributions: the prior distribution of the state $\Pr[\payoffstatevalue]$ and the signal-generating distribution $\Pr[\sigval | \payoffstatevalue]$ defining the conditional distribution of signals. 
    To represent the best-attainable performance of observing a subset $\sig$ of the $n$ basic signals, we use the Bayesian posterior belief upon receiving a signal $\sig = \sigval$ as
     \mvspace{-1mm}
    \[\dgp(\payoffstatevalue| \sigval) := \Pr[\payoffstatevalue|\sigval]=\frac{\dgp(\sigval, \payoffstatevalue)}{\dgp(\payoffstatevalue)}\]
 \mvspace{-4mm}

    \noindent where $\dgp(\sigval, \payoffstatevalue)$ denotes the marginal probability of the signal realized to be $\sigval$ and the state being $\payoffstatevalue$ with expectation over other signals, and $\dgp(\payoffstatevalue)$ denotes the prior $\Pr[\payoffstatevalue]$. Throughout the paper, we use capital letters to denote a random variable of signal, e.g., $\sig$, and use little letetrs to denote a realization of signal, e.g., $\sigval$.
\end{itemize}

\mvspace{-4mm}
\paragraph{Information value.}
Our framework quantifies the value of information in a signal $\sig$ as the expected payoff improvement of an idealized agent who has access to $\sig$ in addition to some baseline information set.
This corresponds to a rational Bayesian DM who knows the prior probability of the state and conditional distribution of signals (i.e., the data-generating process), observes a signal realization, updates their prior to arrive at posterior beliefs, and then chooses a decision to maximize their expected payoff given their posterior belief. 
Formally, given a decision task with payoff function $\score$ and an information model $\dgp$, the rational DM's expected payoff given a (set of) signal(s) $\sig$ is
\mvspace{-1mm}
\begin{equation}
\mathrm{R}
(\sig)
:= \expect[(\sigval, \payoffstatevalue) \sim \dgp]{\score(\action^r(\sigval), \payoffstatevalue)}
\end{equation}
\noindent where $\action^r(\cdot): \sigsp \rightarrow \actionspace$ denotes the decision rule adopted by the rational DM.
\mvspace{-1mm}
\begin{equation}
\label{eq:rationalDM}
    \action^{r}(\sigval) = \arg \max_{\action\in\actionspace} \expect[\payoffstatevalue \sim \dgp(\payoffstatevalue|\sigval)]{\score(\action, \payoffstatevalue)}
\end{equation}
\mvspace{-6mm}

We further characterize the maximum expected payoff that can be achieved with no information.
This can be used as the baseline to quantify the information value of $\sig$ as the payoff improvement of $\sig$ over.
We use $\emptyset$ to represent a null signal and $\mathrm{R}(\emptyset)$ represents the expected payoff of a Bayesian rational DM who only uses the prior distribution to make decisions.
In this case, the Bayesian rational DM takes the best fixed action under the prior, and their expected payoff is:
\mvspace{-1mm}
\begin{equation}
\label{eq:baseline}
\mathrm{R}
(\emptyset) 
:= \max_{\action \in \actionspace} \expect[\payoffstatevalue \sim \pi]{\score(\action, \payoffstatevalue)}
\end{equation}
\mvspace{-4mm}


\begin{definition}
Given a decision task with payoff function $\score$ and an information model $\dgp$, the information value of $\sig$ is defined as
\mvspace{-1mm}\begin{equation}
    \IV
        (\sig) := \mathrm{R}
(\sig) - \mathrm{R}
(\emptyset)
\end{equation}
\end{definition}
\mvspace{-2mm}\revisionmark{
The full information value in a human-AI decision task is the information value of the set of all basic signals.
For example, in the weather decision example, $\IV(\{\basicsig_1, \basicsig_2, \basicsig_3, \basicsig_4\})$ represents the full information value of the probabilistic prediction of rain, the cloudiness, the temperature, and the human decisions.
This defines the upper bound of the information value of any signal, including the agent-complementary information value and instance-level agent-complementary information value that we will define in the following sections.}

\mvspace{-2mm}
\subsection{Agent-Complementary Information Value}
\mvspace{-2mm}

With the above definitions, it is possible to measure the best case additional value that new signals can provide over the information already captured by an agent’s decisions. Here, \textit{agent} may refer to a human, an AI system, or a human–AI team.
The intuition behind our approach is that any information that is used by decision-makers should eventually reveal itself through variation in their decisions.
\revisionmark{\Cref{def:aciv} captures how much complementary information value is offered by a signal over the agent decisions in expectation over the data-generating process.
For example, in the weather decision task, this indicates how much the payoff from the decision-maker's choice of whether to take the umbrella or not can be further improved by incorporating the temperature in their decision rule.}
 
\begin{definition}
\label{def:aciv}
Given a decision task with payoff function $\score$ and an information model $\dgp$, we define the agent-complementary information value ($\ACIV$) of $\sig$ on agent decisions $\actionvar^b$ as 
\mvspace{-1mm}\begin{equation}
    \ACIV
    (\sig; \actionvar^b) := \mathrm{R}
    (\actionvar^b \cup \sig) - \mathrm{R}
    (\actionvar^b)
\end{equation}
\end{definition}
\mvspace{-3mm}

If the $\ACIV$ of a signal $\sig$ is small relative to the baseline (\ref{eq:baseline}), this means either that the information value of $\sig$ to the decision problem is low (e.g., it is not correlated with $\payoffstatevalue$), or that the agent has already exploited the information in $\sig$ (e.g., the agent relies on $\sig$ or equivalent information to make their decisions such that their decisions correlate with $\payoffstatevalue$ in the same way as $\sig$ correlates with $\payoffstatevalue$).
If, however, the $\ACIV$ of $\sig$ is large relative to the baseline, then at least in theory, the agent can improve their payoff by incorporating $\sig$ in their decision making.

 \mvspace{-1mm}
Furthermore, $\ACIV$ can reveal complementary information between different types of agents. 
For instance, if we view AI predictions as $\sig$ and treat available human decisions as the agent signal $\actionvar^b$, a large $\ACIV$ indicates that AI predictions add considerable value beyond what humans alone achieve. In the reverse scenario, if human decisions serve as $\sig$ and AI predictions are $\actionvar^b$, we can measure how much humans can contribute over the information captured in the AI predictions. 

Of course, we are unlikely to observe identical realizations of the signals in continuous-valued or high-dimensional data, such as images or text.
A natural relaxation is to consider the realizations that are sufficiently ``similar'' for the performance of the decision-maker.
\Cref{alg:aciv} learns the posterior distribution of the payoff-relevant state from the signal and then uses that prediction as the probability distribution of the payoff-relevant state to choose the optimal action.
This approach can be readily extended to the case where the signal is more complex, such as human decisions in the form of freeform text (e.g., radiology reports) or explanations of the AI predictions in the form of images (e.g., saliency map). 
\footnote{\revisionmark{Note that the algorithm $(\hat{a}, \hat{a}^b)$ should be cross-validated to avoid overfitting to the observed data and also be evaluated for calibration error since the rational DM treats it as the Bayesian posterior. We provide a sensitivity analysis in \Cref{app:rational_belief_estimator} where we compare the ACIV estimated under the rational belief estimator using linear regression (LR), gradient boosting methods (GBM), and neural networks (NN).}}

\begin{algorithm}[h]
\caption{A method to calculate the ACIV of $\sig$.}
\label{alg:aciv}
\begin{algorithmic}[1]
\STATE \textbf{Input:} Observed realizations $\{\sigval_i, \action^b_i, \payoffstatevalue_i\}_{i=1}^n$, a predictive algorithm $\mathcal{A}$, payoff function $\score$, decision space $\actionspace$, and state space $\payoffstatespace$.
\STATE \textbf{Output:} $\ACIV(\sig; \actionvar^b)$

\STATE $\hat{a} \leftarrow \mathcal{A}(\{\sigval_i, \action^b_i, \payoffstatevalue_i\}_{i=1}^n)$
\STATE $\hat{a}^b \leftarrow \mathcal{A}(\{\action^b_i, \payoffstatevalue_i\}_{i=1}^n)$


\FOR{$i = 1$ to $n$}
\STATE $\hat{p}_i \leftarrow \hat{a}(\sigval_i, \action^b_i)$
\STATE $\hat{\action}^r_i \leftarrow \argmax_{\action \in \actionspace} \expect[\payoffstatevalue \sim \hat{p}_i]{\score(\action, \payoffstatevalue)}$

\STATE $\hat{p}^b_i \leftarrow \hat{a}^b(\action^b_i)$
\STATE $\hat{\action}^{rb}_i \leftarrow \argmax_{\action \in \actionspace} \expect[\payoffstatevalue \sim \hat{p}^b_i]{\score(\action, \payoffstatevalue)}$

\STATE $s_i \leftarrow \score(\hat{\action}^r_i, \payoffstatevalue_i) - \score(\hat{\action}^{rb}_i, \payoffstatevalue_i)$
\ENDFOR

\STATE \textbf{return} $\frac{1}{n} \sum_{i=1}^n s_i$

\end{algorithmic}
\end{algorithm}

 \mvspace{-1mm}
\subsection{Instance-level Agent-complementary Information Value}
 \mvspace{-1mm}
$\ACIV$ quantifies the value of the decision-relevant information in a signal $\sig$ across the distribution of all possible realizations defined by the data-generating model.
To provide analogous instance-level quantification of information value, we define Instance-Level agent-complementary Information Value ($\ILIV$) to quantify the value of the decision-relevant information encoded by a single realization of the signal.
This finer-grained view makes it possible to analyze how much an agent can benefit in theory from better incorporating instance-level information in their decision.
\revisionmark{For example, in the weather decision task, this indicates how much the payoff of the decision-maker can be improved by knowing the temperature is 21$^\circ$C.}

Given a realization of signal $\sigval = \{\basicsigval_{j_1}, \ldots, \basicsigval_{j_k}\}$, we want to know the maximum expected payoff gain from the access to $\sigval$ on the instances where $\sigval$ is realized over the existing information encoded in agent decisions. Intuitively, this captures how much ``room'' there is for a specific signal to be better used.
Formally, given a decision task with payoff function $\score$ and information model $\dgp$, the expected payoff of the rational DM given signal  $\sig = \sigval'$ on instances with the signal realization $\sig = \sigval$ is
\begin{equation}
    \mathrm{r}^{\sigval
    }(\sigval') := \expect[\payoffstatevalue \sim \dgp(\payoffstatevalue | \sigval)]{\score(\action^r(\sigval'), \payoffstatevalue)}
\end{equation}

\noindent where $\action^r(\sigval')$ is the Bayesian optimal decision on receiving  $\sigval'$ as defined in \Cref{eq:rationalDM}. 
Note that we allow $\sigval'\neq \sigval$, i.e.\ the signal $\sigval'$ observed by the rational DM can be different from the actual realization $\sigval$. 
The expected payoff of a rational DM who is misinformed is guaranteed to be lower than is correctly informed, i.e., $\mathrm{r}^{\sigval}(\sigval')\leq \mathrm{r}^{\sigval}(\sigval)$ for any $\sigval'$. 
This notion of flexibility allows to consider the information value of a counterfactual realization of a signal.
\revisionmark{For example, in the weather decision task, this notation is able to describe how much payoff can change when the decision maker is misinformed that the temperature is 18$^\circ$C when the temperature is actually 21$^\circ$C.}
We use this in designing the explanation of ILIV-SHAP in the next section.

If we consider the agent decisions in addition to the realization $\sigval$, the rational DM's expected payoff on instances where $\sig = \sigval$ can be written as
\begin{equation}
\mathrm{r}^{\sigval
}(\sigval'; \actionvar^b) := \expect[(\action^b, \payoffstatevalue) \sim \dgp(\action^b, \payoffstatevalue | \sigval)]{\score(\action^r(\sigval' \cup \action^b), \payoffstatevalue)}
\end{equation}


\begin{definition}
\label{def:RIIV}
Given a decision task with payoff function $\score$ and an information model $\dgp$, we define the instance-level agent-complementary information value ($\ILIV$) of signal realization $\sig = \sigval'$ on instances where $\sig = \sigval$ as:
 \mvspace{-1mm}
\begin{equation}
    \ILIV^{\sigval
    }(\sigval'; \actionvar^b) := \mathrm{r}^{\sigval
    }(\sigval'; \actionvar^b) - \mathrm{r}^{\sigval
    }(\emptyset; \actionvar^b).
\end{equation}
\end{definition}
\noindent where $\mathrm{r}^{\sigval
}(\emptyset; \actionvar^b)$ represents the expected rational payoff on instances of $\sig = \sigval$, where the rational DM only knows the agent decisions without knowing any realizations of $\sig$. 
$\ILIV$ maximizes when the signal does not misinform, i.e., $\ILIV^{\sigval
}(\sigval; \actionvar^b) \geq \ILIV^{\sigval
}(\sigval'; \actionvar^b), \text{for }\sigval'\in \sigsp$.

\begin{algorithm}[h]
\caption{A method to calculate the ILIV of $\sigval'$ on the instances of $\sigval$.}
\label{alg:iliv}
\begin{algorithmic}[1]
\STATE \textbf{Input:} Observed realizations $\{\sigval_i, \action^b_i, \payoffstatevalue_i\}_{i=1}^n$, test counterfactual signal realization $\sigval'$, test signal realization $\sigval$, a predictive algorithm $\mathcal{A}$, payoff function $\score$, decision space $\actionspace$, and state space $\payoffstatespace$.
\STATE \textbf{Output:} $\ILIV^{\sigval}(\sigval'; \actionvar^b)$

\STATE $\hat{a} \leftarrow \mathcal{A}(\{\sigval_i, \action^b_i, \payoffstatevalue_i\}_{i=1}^n)$
\STATE $\hat{a}^b \leftarrow \mathcal{A}(\{\action^b_i, \payoffstatevalue_i\}_{i=1}^n)$

\STATE $\{j_1, \ldots, j_k\} \leftarrow$ the indices of instances where $\sigval_{j_i} = \sigval$ for any $i \in [k]$


\FOR{$i = 1$ to $k$}
\STATE $\hat{p}_{i} \leftarrow \hat{a}(\sigval', \action^b_{j_i})$
\STATE $\hat{\action}^r_i \leftarrow \argmax_{\action \in \actionspace} \expect[\payoffstatevalue \sim \hat{p}_i]{\score(\action, \payoffstatevalue)}$

\STATE $\hat{p}^b_i \leftarrow \hat{a}^b(\action^b_{j_i})$
\STATE $\hat{\action}^{rb}_i \leftarrow \argmax_{\action \in \actionspace} \expect[\payoffstatevalue \sim \hat{p}^b_i]{\score(\action, \payoffstatevalue)}$

\STATE $s_i \leftarrow \score(\hat{\action}^r_i, \payoffstatevalue_i) - \score(\hat{\action}^{rb}_i, \payoffstatevalue_i)$
\ENDFOR

\STATE \textbf{return} $\frac{1}{k} \sum_{i=1}^k s_i$

\end{algorithmic}
\end{algorithm}
\mvspace{-4mm}

\section{Information-based Explanation}
\mvspace{-4mm}

We define an \textit{information-based} explanation (ILIV-SHAP) to communicate where the AI prediction offers complementary information over the agent decisions.
Traditional saliency-based explanations communicate the average contribution of each feature to a prediction over the baseline (prior) prediction,
while ILIV-SHAP communicates the average contribution of each feature to the agent-complementary information value contained in the prediction.


Specifically, suppose a model $f$ that, for example, takes as input $m$ features and outputs a real number.
Given an instance $\mathbf{x} = (x_1, \ldots, x_m)$, the importance of one feature $x_i$ to the model output $f(\mathbf{x})$ is encoded by the expected difference of model outputs when $x_i$ is marginalized out.
This is quantified by $f(\mathbf{x}) - \expect{f(X) | X_{-i} = \mathbf{x}_{-i}}$, where $X_{-i}$ denotes all features except $X_i$.
Considering the interaction between features, SHAP~\citep{lundberg2017unified} uses the Shapley value to quantify the importance scores averaged on different combinations of features:
\begin{equation*}
    \phi_i(f, \mathbf{x}) = \sum_{\mathbf{x}' \subseteq \mathbf{x}} \frac{|\mathbf{x}'|!(m - |\mathbf{x}'| - 1)!}{m!} [g_f(\mathbf{x}') - g_f(\mathbf{x}' \backslash x_i)]
\end{equation*}
\noindent where $g_f(\mathbf{x}')$ denotes the expected output conditioned on $\mathbf{x}'$, i.e., $\expect{f(X)| X' = \mathbf{x}'}$ for any $\mathbf{x}' \subseteq \mathbf{x}$ where $\mathbf{x}'$ is the features that are not marginalized out.


\begin{definition}[ILIV-SHAP]
\label{def:shap}
Given a model $f$ and data features $\mathbf{x} = (x_1, \ldots, x_m)$, the importance score of the \textit{i}-th feature by ILIV-SHAP is 
\begin{equation*}
    \phi_i^{\ILIV}(f, \mathbf{x}) = \sum_{\mathbf{x}' \subseteq \mathbf{x}} \frac{|\mathbf{x}'|!(m - |\mathbf{x}'| - 1)!}{m!} [\ILIV^{f(\mathbf{x})
    }(g_f(\mathbf{x}'); \actionvar^b) - \ILIV^{f(\mathbf{x})
    }(g_f(\mathbf{x}' \backslash x_i); \actionvar^b)]
\end{equation*}
\end{definition}
\mvspace{-4mm}
\noindent where $\ILIV^{f(\mathbf{x})
}(g_f(\mathbf{x}'); \actionvar^b)$ denotes a counterfactual evaluation of $\ILIV$, which quantifies the expected payoff gain from additionally knowing $g_f(\mathbf{x}')$ over $\actionvar^b$ on the instances where the actual prediction is $f(\mathbf{x})$. 

ILIV-SHAP shares the same properties as SHAP~\citep{lundberg2017unified} by similarly constructing the importance scores.
For example, ILIV-SHAP satisfies the critical \textit{efficiency} axiom, i.e., the sum of the importance scores equals the information value of the model output, and the \textit{symmetry} axiom, i.e., the importance scores are the same for any two features that contribute the same amount to the information value of the model output.

\revisionmark{Sample-based methods for approximating SHAP values\footnote{We note that ILIV-SHAP does not resolve the foundational issues of SHAP~\citep{huang2024failings}, but just as a demonstration on how the information value such as ILIV can be used in explanations.} can also be applied to ILIV-SHAP, such as permutation sampling~\citep{strumbelj2010efficient}, Kernel SHAP~\citep{lundberg2017unified}, and Partition SHAP~\citep{lundberg2018consistent}.
Because $\ILIV$ is expected to be non-decreasing in the number of features included in $g_f(\mathbf{x}')$---more features mean more information---sample-based methods can achieve better stability across the permutations on ILIV-SHAP than on SHAP.}

 \mvspace{-2mm}
\mvspace{-2mm}
\section{Experiment}

\mvspace{-2mm}
We use a preregistered between-subjects online experiment to answer two questions: 
\mvspace{-2mm}
\begin{enumerate} 
\item Can ACIV identify which AI model will result in the best human-AI decision-making?
\item Does ILIV-SHAP improve human-AI decision-making over SHAP alone?
\end{enumerate}
\mvspace{-2mm}
Our experiment assigns participants to one of two AI models with varying ex-ante ACIV \{AI1=high ACIV, AI2=low ACIV\}, and one of three explanation conditions \{ILIV-SHAP and SHAP, SHAP, No Explanation\}. We hypothesize that the higher ACIV model will lead to better human-AI decisions, and that presenting ILIV-SHAP explanations for this model will lead to better human-AI decisions than the other explanation types.

\mvspace{-4mm}
\paragraph{Task and data.} 
We study AI-assisted house price prediction, following other recent crowdsourced studies (e.g., \citet{chiang2021you, hemmer2022effect, poursabzi2021manipulating, holstein2023toward}). 
This task does not require any domain knowledge, making it suitable for a broad participant population.
Additionally, the task is complex enough to benefit from AI assistance. We use the Ames, Iowa Housing Dataset \citep{de2011ames}.
\mvspace{-4mm}
\paragraph{Participants and procedure.} 
We recruited 421 US-based participants via the crowdsourcing platform Prolific.
Each participant was randomly assigned to 12 houses out of 733 in the Ames Iowa Housing Dataset. Each completed a sequence of 24 trials, where two decisions were elicited for each house: one without AI assistance and one with. 
In the first 12 trials, participants were asked to predict the house price of the assigned house using six features of the house, including \textit{year built}, the \textit{living area above ground} in square feet, the size of garage in \textit{car capacity}, the number of \textit{fireplaces}, \textit{year remodeled}, and ratings of the house's \textit{material and finish}.
We intentionally reduced the interpretability of the \textit{year remodeled} and \textit{material and finish} by relabeling them as ``Feature X'' and ``Feature Y'' and rescaling their values.
Therefore, the AI model that takes these two features as input is expected to have complementary information over human participants.
In the second 12 trials, participants were asked to revise their guesses in the first round after seeing the AI's prediction and explanation.
We incentivized participants with a base reward of $3.00$ USD and a bonus based on the mean squared error (MSE) between their guesses and the true house price in the 24 trials: $\textit{bonus} = 3.00 - \frac{\textit{MSE}}{3 \times 10^9}$.

\begin{figure}
    \centering
    \begin{subfigure}{0.49\textwidth}
        \centering
        \includegraphics[width=\textwidth]{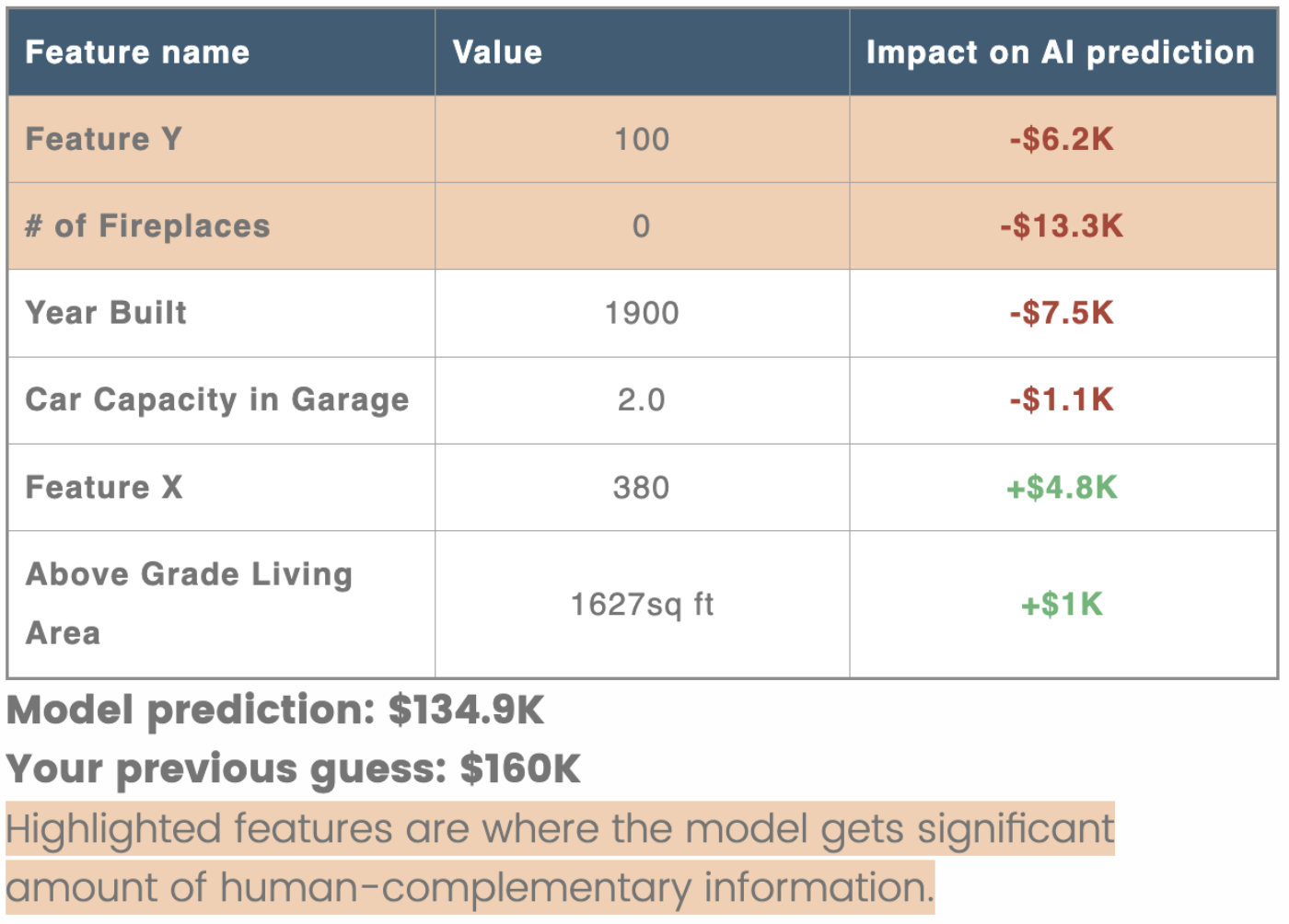}
        \caption{AI1 with ILIV-SHAP and SHAP explanation.}
        \label{fig:mimic_iv}
    \end{subfigure}
    \hfill
    \begin{subfigure}{0.49\textwidth}
        \centering
        \includegraphics[width=\textwidth]{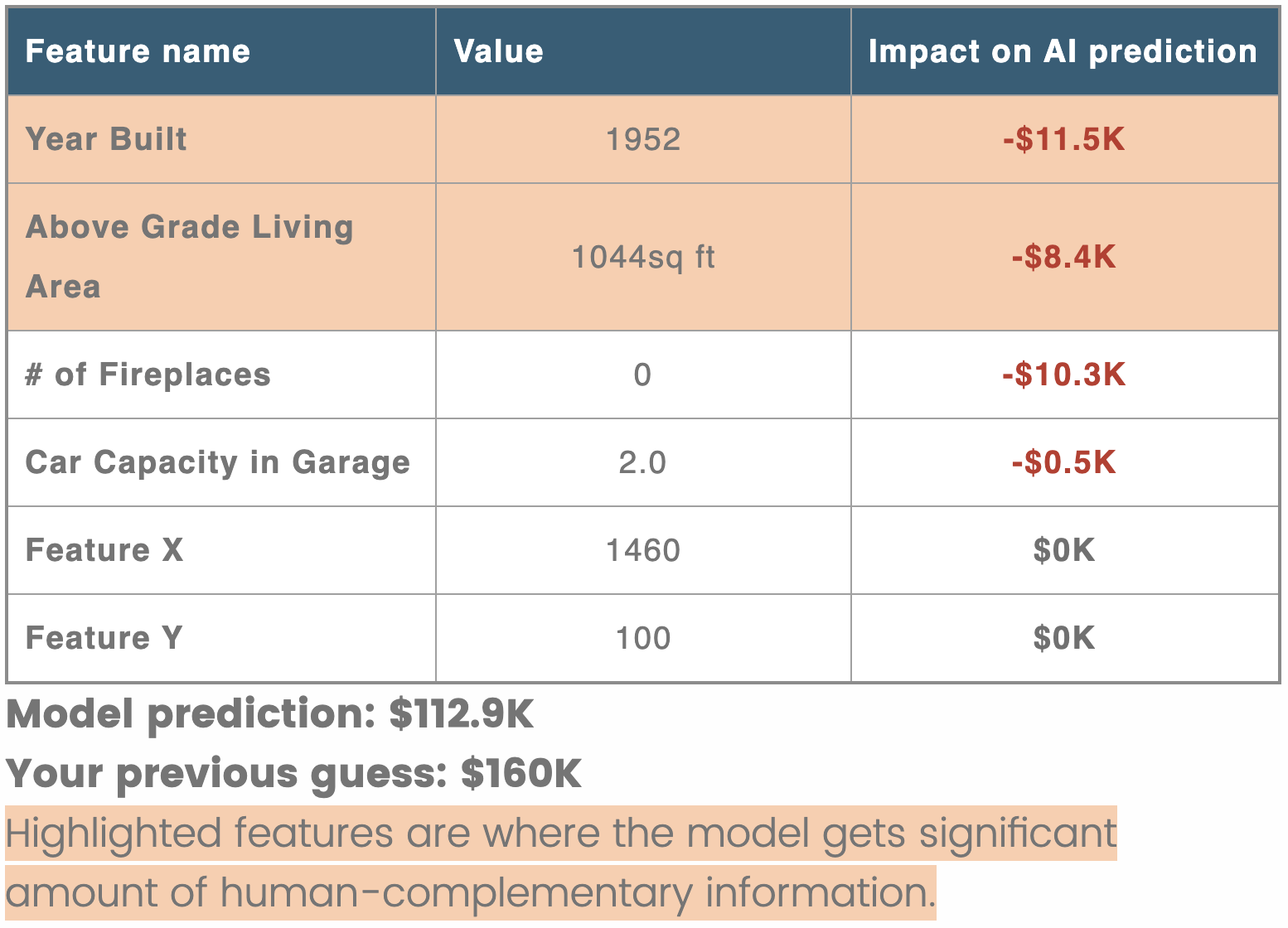}
        \caption{AI2 with ILIV-SHAP and SHAP explanation.}
        \label{fig:groh_et_al}
    \end{subfigure}
    \caption{The screenshot of the interface for ILIV-SHAP and SHAP explanations. The third column is the SHAP value and the highlights are based on the ILIV-SHAP value. Because AI2 does not have access to Feature X/Y, both their ILIV-SHAP and SHAP values are zero for AI2.}
    \label{fig:iliv_screenshot}
\end{figure}

\mvspace{-4mm}
\paragraph{AI models and explanations.}
Participants were assigned to one of six experimental conditions resulting from crossing the two AI models with the three explanation types. We designed AI1 and AI2 to have varying potential to complement humans: 
AI1 was trained with the six features and the true house price, with noise added to the six features (which is meant to be bring down the accuracy of AI1 to be comparable with AI2).
On the other hand, AI2 was trained with the four human-interpretable features and the true house price, such that there would be less potential for the predictions to add complementary information over human judgments. 
We used the same training data for both AI models and ensured that the two AI models achieved similar performance (\Cref{fig:exp_result} left), so that model accuracy differences would not confound our results. 
In the SHAP explanation, we used the SHAP values of the six features to explain the AI's prediction.
To generate the ILIV-SHAP explanations, we used independent human decisions we collected on Prolific for the dataset using the same task and features prior to running the official study. 
We present the exact ILIV-SHAP values of the six features, which ranked and highlighted the features where the human-complementary information (ILIV) over the AI's prediction exceeds a threshold.\footnote{\revisionmark{We choose the threshold to highlight the features that can lead to a at least \$0.25 boost on the bonus to participants, which is translated into 7.5$\times$10$^8$ MSE as the threshold for ILIV-SHAP values.}}
See \Cref{app:screenshots} for the screenshots of the explanations and instructions.
\mvspace{-4mm}
\paragraph{Evaluation metrics.}
We report the $\ACIV$ of the AI models in mean squared error (MSE)\footnote{We included MSE since it is a proper scoring rule while MAPE is not.} and mean absolute percentage error (MAPE).
To measure the human alone and human-AI team performance, we fit a preregistered Bayesian hierarchical regression model with weakly informed priors\footnote{link to preregistration redacted for peer review} to the percentage error (PE) between the human's prediction and the true house price, $\textit{PE} = (\action - \payoffstatevalue) / \payoffstatevalue$, using 
R's \texttt{brms} package \citep{burkner2017brms}.
\mvspace{-1mm}
\begin{align*}
    \textit{PE} &\sim \text{student\_t}(\mu, \sigma, \nu) \\
    \mu &= \textit{AI} * \textit{explanation} + \textit{round} + (1 | \textit{participant\_id}) \\
    \log(\sigma) &= \textit{AI} * \textit{explanation} + \textit{round} + (1 | \textit{participant\_id}) \\
    \nu &\sim \text{Gamma}(2, 0.1)
\end{align*}
\noindent 
where \textit{AI} and \textit{explanation} are indicators of the experimental conditions, \textit{round} is an indicator of whether the trial is in the first or second round, and \textit{participant\_id} is a unique identifier for each participant. 
We evaluate human-AI team performance by the reduction in absolute percentage error (APE) over human-alone predictions in the first round, i.e., $\text{Reduction in APE} = \expect[]{\lvert\textit{PE}\rvert \mid \textit{round} = 1} - \expect[]{\lvert\textit{PE}\rvert \mid \textit{round} = 2}$, where the expectation is taken over the Bayesian posterior distribution of the above Bayesian model.
\begin{table}[t]\centering
    \caption{The two AI models that are used in the experiment. AI1 has access to all the features, while AI2 has access to only the human-interpretable features.}
    \resizebox{\linewidth}{!}{
        \begin{tabular}{lccccc}
            \toprule
            & Input features & MAPE & R squared & $\ACIV$ (in MSE) & $\ACIV$ (in MAPE) \\
            \midrule
            AI1 & \textbf{All} features & 14.30\% & 0.81 & \textbf{6.5 $\times$ 10$^{8}$} & \textbf{4.61\%} \\
            AI2 & \textbf{Human-interpretable} features & 14.51\% & 0.81 & $3.7 \times 10^{8}$ & 2.00\% \\
            \bottomrule
        \end{tabular}
    }
    
    \label{tab:exp_setup}
\end{table}

\begin{figure*}[t]
    \centering
    \includegraphics[width=0.9\linewidth]{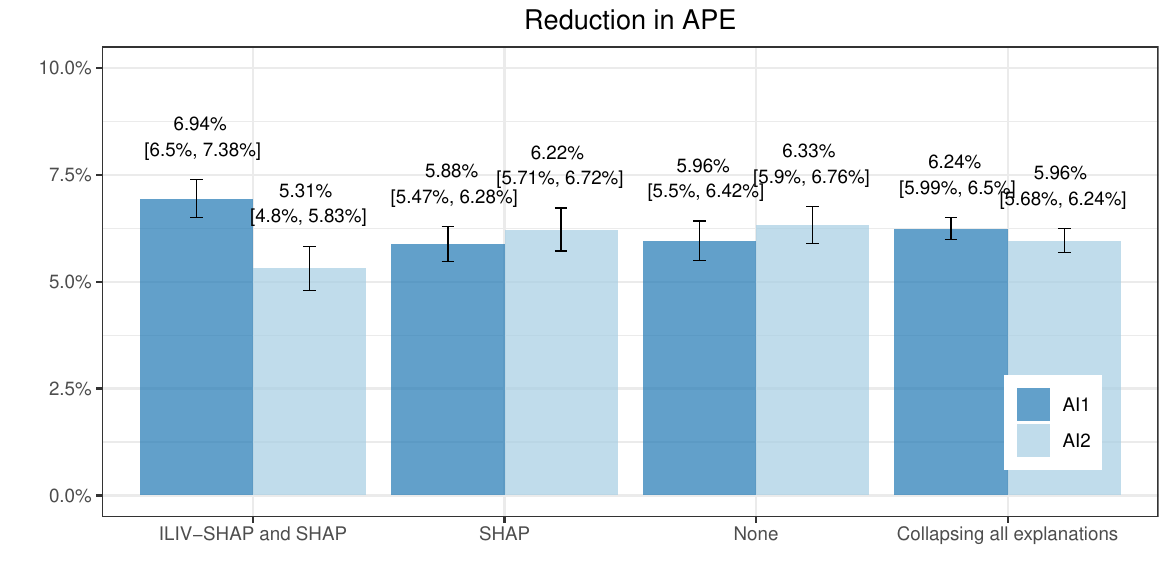}
    \mvspace{-2mm}
    \caption{Reduction in the human-AI team's absolute percentage error (APE) with 95\% confidence intervals according to posterior predictions of our regression model. 
    }
\mvspace{-6mm}
    \label{fig:exp_result}
\end{figure*}
\mvspace{-4mm}
\paragraph{Results.}
\Cref{tab:exp_setup} confirms the expected ranking of AI models by $\ACIV$. 
AI1, which had access to the two human-uninterpretable features, achieves higher $\ACIV$ than AI2, which only had access to the four human-interpretable features.
Note the similar predictive performance of the two AIs. 

The posterior predictions of human-AI team performance from our regression model also align with expectations based on the complementary information each model provides (\Cref{fig:exp_result} far right).
Collapsing across explanation conditions, AI1 (higher ACIV) reduces APE by more than AI2 ($6.24\%$$[5.99\%,$$ 6.5\%]$ vs $5.96\% [5.68\%, 6.24\%]$ respectively). 
Access to the ILIV-SHAP + SHAP explanation results in a greater reduction in APE for AI1 than AI2 ($6.94\% [6.50\%, 7.38\%]$ vs $5.31\% [4.80\%, 5.83\%]$ respectively)\footnote{See \Cref{app:statistical_significance} for the significant test results}.

\Cref{fig:exp_result} also illustrates the effect of the different explanation conditions on human-AI team performance.
When the AI has sufficient complementing information (AI1), the ILIV-SHAP + SHAP explanation more effectively reduces APE for the human-AI decisions than the SHAP explanation and baseline (no explanation) ($6.94\% [6.50\%, 7.38\%]$ for ILIV-SHAP,  $5.88\% [5.47\%, 6.28\%]$ for SHAP, and $5.96\% [5.50\%, 6.42\%]$ for no explanation). 

\mvspace{-4mm}
\section{Demonstrations}
\mvspace{-1mm}
\mvspace{-1mm}
\subsection{Chest X-ray Diagnosis}
\mvspace{-2mm}
We study a chest X-ray diagnosis task. 
As agent decisions, $\actionvar^\text{AI}$ and $\actionvar^\text{H}$, we consider five predictive models fine-tuned on the MIMIC-CXR database~\citep{johnson2019mimic} and radiologists' textual reports recorded in MIMIC-CXR.
We use five pretrained image models with the same choices in \citet{irvin2019chexpert}.
We train the models on 12,228 radiographs, and validated on 6,115 randomly sampled radiographs.
For the payoff-relevant state, $\payoffstatevalue \in \payoffstatespace = \{0, 1\}$, we used results from two types of blood tests (`NT-proBNP' and `troponin' cut by age-specific thresholds).
The decision task is formalized as a prediction problem with $\actionspace = \Delta \payoffstatespace$ and $\score(\action, \payoffstatevalue) = 1 - (\action - \payoffstatevalue)^2$.
\mvspace{-4mm}
\paragraph{Can the AI models complement human judgment?} 
\Cref{fig:mimic_iv} shows all the models offer complementary information value to the radiologists' reports (\textbf{\textcolor{humanaidecision}{green distributions}} improve over the \textbf{\textcolor{humandecision}{purple distribution}}), and in the other direction, the radiologists' reports offer complementary information value to the models (\textbf{\textcolor{humanaidecision}{green distributions}} improve over the \textbf{\textcolor{aidecision}{orange distributions}}).
\mvspace{-4mm}
\paragraph{Which AI model offers the most decision-relevant information over human judgments?} 
\Cref{fig:mimic_iv} shows that VisionTransformer contains slightly higher information value than the other models, and Inception v3 contains slightly lower information value than the other models.
We further assess the robustness of VisionTransformer's superiority over the other AI models across many possible payoff functions to test if there is a Blackwell ordering of models in \Cref{exp1} and \Cref{fig:robust-analysis}.



\begin{figure}
    \centering
    \begin{subfigure}{0.49\textwidth}
        \centering
        \includegraphics[width=\textwidth]{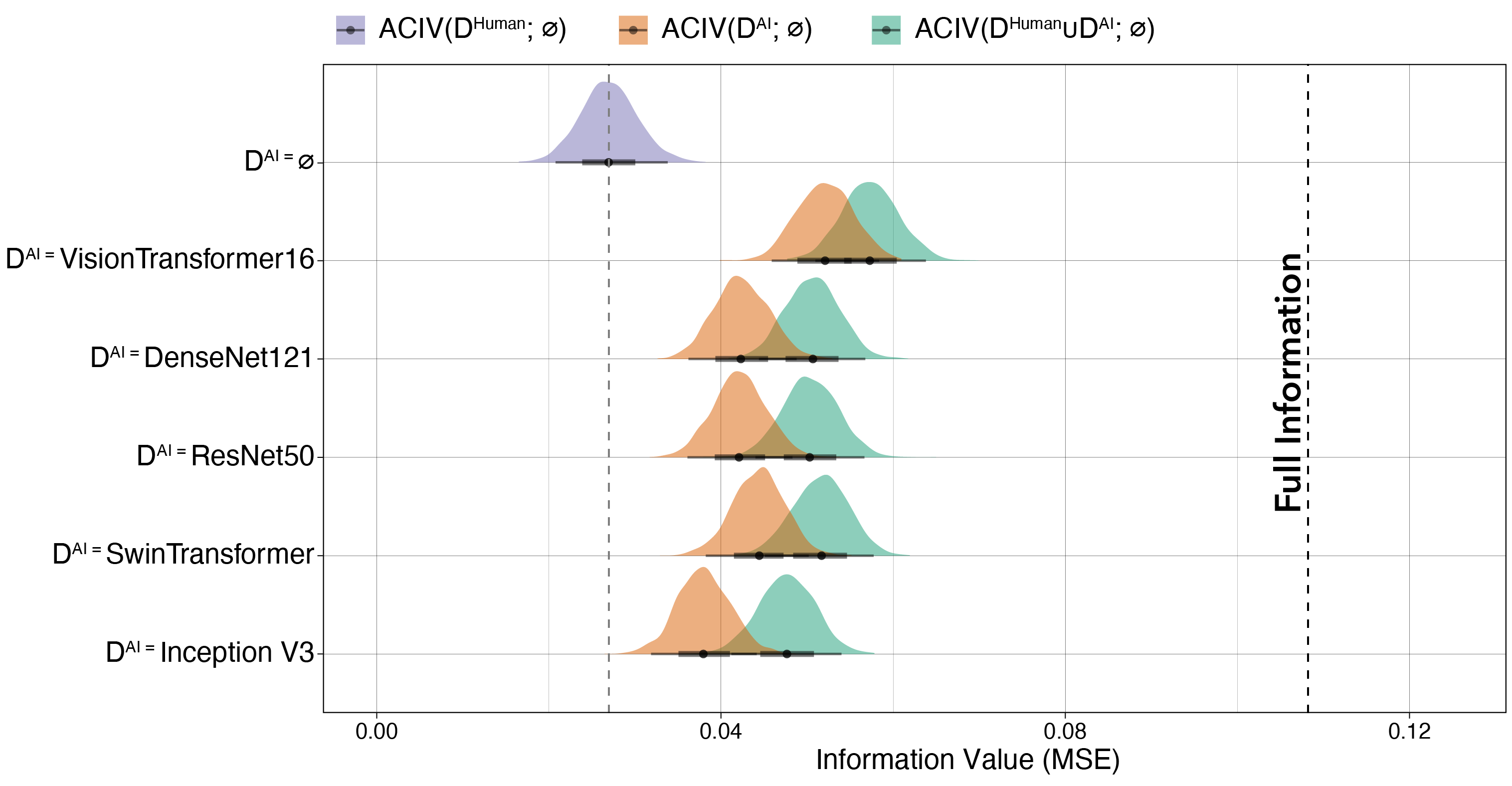}
        \caption{Chest X-ray diagnosis results.}
        \label{fig:mimic_iv}
    \end{subfigure}
    \hfill
    \begin{subfigure}{0.49\textwidth}
        \centering
        \includegraphics[width=\textwidth]{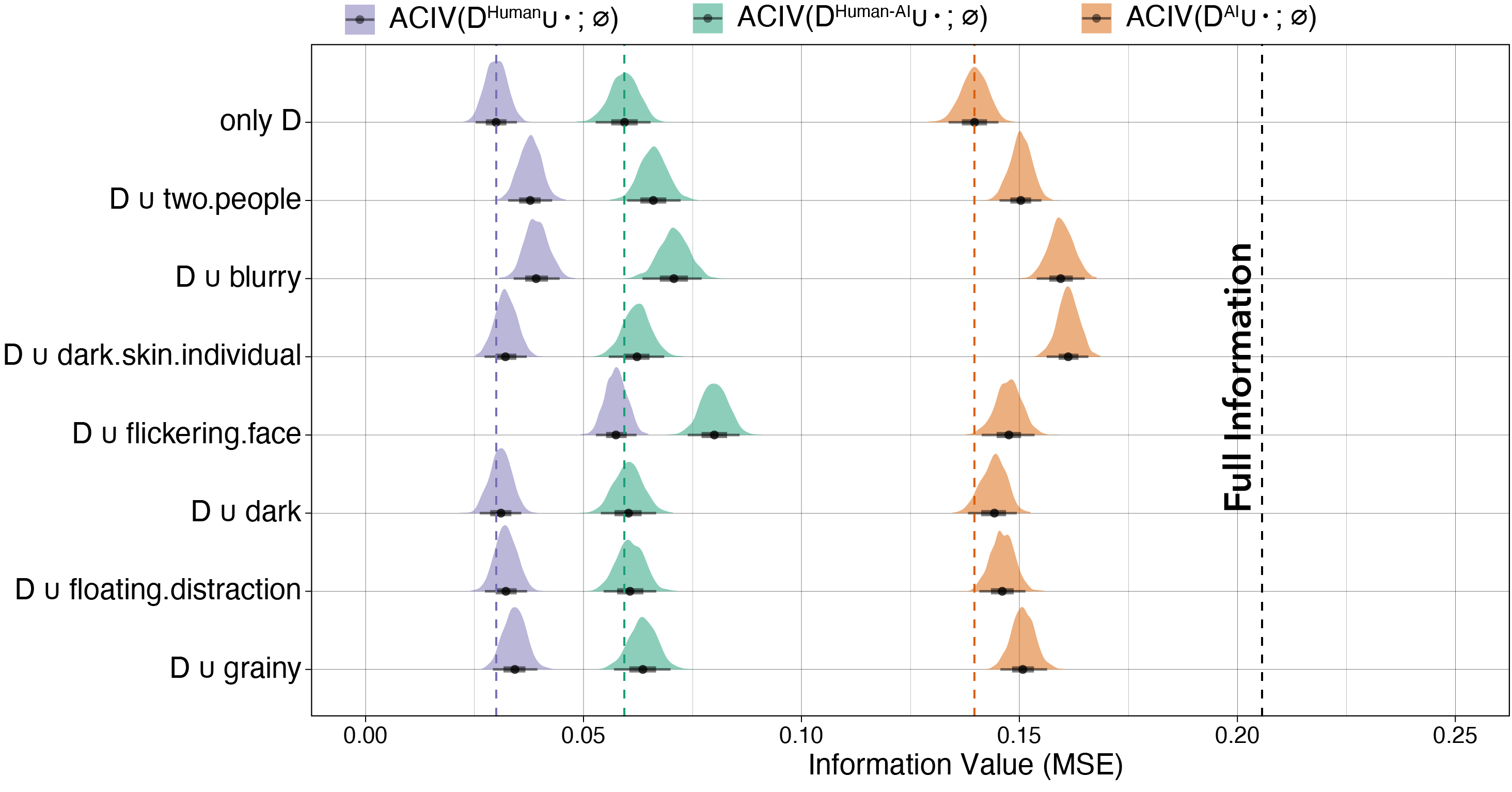}
        \caption{Deepfake detection results}
        \label{fig:groh_et_al}
    \end{subfigure}
    \mvspace{-2mm}
    \caption{Distributions show information values. For a), we plot ACIV of \textbf{\textcolor{humandecision}{radiologist decisions}}, different choices of \textbf{\textcolor{aidecision}{AI models}} and signals that \textbf{\textcolor{humanaidecision}{combine AI predictions and radiologist decisions}}. For b), we plot the information value of the combination of video-level features and agent decisions (including \textbf{\textcolor{humandecision}{human decisions}}, \textbf{\textcolor{aidecision}{AI predictions}}, and \textbf{\textcolor{humanaidecision}{human-AI teams' decisions}}). \revisionmark{Full information represents all the information available to the human decision-makers--the radiology images for the chest X-ray diagnosis task and the seven video-level features, human decisions, and AI predictions for the deepfake detection task. See \Cref{app:rational_belief_estimator} for the sensitivity analysis of \Cref{alg:aciv}. } 
    }
\mvspace{-6mm}
    \label{fig:demonstrations}
\end{figure}

As shown in this demonstration, doctors may use our framework to learn how much complementary information value the AI models offer over their decisions, and which model offers the most.
\mvspace{-2mm}
\subsection{Deepfake Detection}
\mvspace{-2mm}
We study a deepfake detection task~\citep{dolhansky2020deepfake, groh2022deepfake}.
We select the  model in the Deepfake Detection Challenge~\citep{zhang2016joint}, with estimated $65\%$ accuracy on holdout data.
We use the human decisions and the human-AI team's decisions from \citet{groh2022deepfake}, who collected judgments on the videos from n=5,524 participants recruited on Prolific.
We use the Brier score as the payoff function, 
with the binary payoff-related state: $\payoffstatevalue\in \{0,1\} = \{\text{genuine}, \text{fake}\}$.
This choice differs from \citet{groh2022deepfake}'s use of mean absolute error, but again we prefer the quadratic score because it is a proper scoring rule where truthfully reporting beliefs maximizes the score.
\citet{groh2022deepfake} manually label seven video-level features, which we use as binary indicators in place of \cref{alg:aciv} in light of the high dimensional signals: graininess, blurriness, darkness, presence of a flickering face, presence of two people, presence of a floating distraction, and the presence of an individual with dark skin\footnote{The “dark-skin individual” label reflects a subjective visual attribute. It is used here only for robustness analysis—not for causal interpretation or demographic inference.}.
We show the ACIVs of the seven features over the  decisions, the AI predictions, and the human-AI decisions in \Cref{fig:groh_et_al}.


\mvspace{-2mm}
\paragraph{How much decision-relevant information do agents' decisions offer?}
We first compare the information value of the AI predictions to that of the human decisions.
\Cref{fig:groh_et_al} shows that \textbf{\textcolor{aidecision}{AI predictions}} provide about $65\%$ of the total possible information value over the no-information baseline, while \textbf{\textcolor{humandecision}{human decisions}} only provide about $15\%$.
We next consider the \textbf{\textcolor{humanaidecision}{human-AI decisions}}. 
Given that the AI predictions contain a significant portion of the total possible information value, we might hope that when participants have access to the AI predictions, their performance will be close to the full information baseline.
However, the information value of the \textbf{\textcolor{humanaidecision}{human-AI decisions}} only achieves a small proportion of the total possible information value ($30\%$). 

\mvspace{-4mm}

\paragraph{How much additional decision-relevant information do the available features offer over agents' decisions?}
To understand what information might improve human decisions, we assess the $\ACIV$s of different video-level features over different agents. 
As shown on the fifth row in \Cref{fig:groh_et_al}, the presence of a flickering face offers larger $\ACIV$ over human decisions than over AI predictions, meaning that human decisions could improve by a greater amount if they were to incorporate this information. 
Meanwhile, as shown on the fourth row in \Cref{fig:groh_et_al}, the presence of an individual with dark skin offers larger $\ACIV$ over AI predictions than over human decisions, suggesting that humans make greater use of this information.
This suggests that the AI and human rely on differing information to make their initial predictions.\footnote{For space constraints, we refer to the full descriptions and results in \Cref{exp1,exp2}.}
 \mvspace{-2mm}
\section{Discussion and Limitations}
 \mvspace{-2mm}

Our decision-theoretic framework quantifies the information value of signals available in a human-AI decision setting over the information value of agent decisions. Importantly, the basis of our framework in Bayesian decision theory does not require that actual (e.g., human) decision-makers achieve Bayesian rational decision-making. Rather, it provides a theoretical basis to support comparisons to human behavior to drive learning (see, e.g.,~\citep{guo2024decision,hullman2021designing,wu2023rational}. 
This theoretical basis is necessary to establish well-defined benchmarks. 
We experimentally demonstrate the power of the framework for analyzing and improving human-AI decision-making.
Our proposed ILIV-SHAP explanation improves performance over an existing state-of-the-art explanation strategy.
This suggests that valuing information value in terms of what it says about the payoff relevant state, not just the AI prediction, can improve the design of signals for human-AI decision-making. Hence our work offers theoretical support for attempts to design new explanations (e.g., \citet{li2025text}).

Our work provides a promising methodological framework for an emerging research agenda around optimally combining agents' information for decisions (e.g., \cite{alur2023auditing,alur2024distinguishing}).
While we focused on human-AI decisions, the framework an be applied to any combination of AI or human agent judgments. 
Though most of our definitions and analysis are focused on decision tasks with single well-defined payoff function, the framework can be readily extended to more complex decision tasks with an ambiguous set of payoff functions or totally unidentifiable payoff functions.
We present a robustness analysis framework in \Cref{app:robustness} to calculate the Blackwell ordering of models over proper scoring rules.
A demonstration of the robustness analysis is shown in \Cref{exp1}.

\revisionmark{
The experiment is designed to provide complementarity by construction in order to function as a proof of concept.
We believe future work is needed to evaluate the impact of ILIV-SHAP in real deployment of human-AI decision-making workflows, such as where the human or AI might have additional private information over each other.
}

\newpage
\section{Reproducibility Statement}

The main results of our paper is to provide a framework guiding the analysis of complementary information in human-AI decision-making. 
We provide a python library in supplementary martial to calculate the quantities defined in our framework.

We provide the data and code (which are all put in separate Jupyter notebooks) to reproduce all the results in our paper, including the demonstrations and the empirical study in the main text, and the observational study and robustness analysis in the Appendix. 


\bibliography{ref}
\bibliographystyle{iclr/iclr2026_conference}

\newpage
\appendix
\onecolumn

\section{The Combinatorial Nature of the Value of Signals}
\label{app:shapley}

When decision-makers are provided with multiple signals, the signals have the combinatorial property by nature.
Acknowledgd by recent works in decision theory and game theory~\citep{chen2016informational}, one signal may have no information value by itself, but it can be complementary to other signals to provide information value.
For example, two signals $\basicsig_1$ and $\basicsig_2$ are uniformly random bits and the state $\payoffstatevalue=\basicsig_1 \oplus \basicsig_2$, the XOR of $\basicsig_1$ and
$\basicsig_2$.
In this case, neither of the signals offers information value on its own, but knowing both leads to the maximum payoff.
Though we did not explicitly observe the complementation between signals in our survey of human-AI decision-making tasks (see the results of deepfake detection in supplementary materials), we want to note that our framework can be extended to consider this complementation between signals.
We use the Shapley value~\citep{shapley1953value} to interpret the contribution to the $\ACIV$ of each basic signal.
$\shapleyval$ is the average of the marginal contribution of a signal in every combination with other signals.
\begin{equation}
\label{eq:shapley_infogain}
    \shapleyval
    (\sig) = \frac{1}{n} \sum_{\sig' \subseteq \{\basicsig_1, \ldots, \basicsig_n\} / \sig} {(n - \lvert \sig \rvert)\choose \lvert\sig'\rvert}^{-1} (\ACIV^{\dgp, \score}(\sig' \cup \sig; \actionvar^b) - \ACIV^{\dgp, \score}(\sig'; \actionvar^b))
\end{equation}

The following algorithm provides a polynomial-time approximation of the Shapley value of $\ACIV$. Under the assumption of submodularity, it orders the signals the same as the Shapley value.

        

        
        \begin{algorithm}[H]
        \caption{Greedy algorithm for marginal gain of $\ACIV$}\label{alg:cap}
        \begin{algorithmic}[1]
        \STATE $V^* = \{\actionvar^b\}$
        \STATE $\Phi^* = \{\}$
        \FOR{$i=1$ to $n$}
        \STATE $\phi'_j = \ACIV(\basicsig_j; V^*) \ for \ each \ j$
        \STATE $j^* = \arg \max_{j \ s.t. \ \basicsig_j \notin V^*} \phi'_j$
        \STATE $\phi_{j^*} = \max_{j \ s.t. \ \basicsig_j \notin V^*} \phi'_j$
        \STATE $add \ \basicsig_{j^*} \ to \ V^*$
        \STATE $add \ \phi_{j^*} \ to \ \Phi^*$
        \ENDFOR
        \STATE $output \ \phi_{j^*}$
        \end{algorithmic}
        \label{alg:seq}
        \end{algorithm}

\section{Robust Analysis of Information Order}
\label{app:robustness}
Our approach assumes a decision problem as input and evaluates agents' decisions and use of information on this problem. However, 
evaluators may face ambiguity around the appropriate decision problem specification, and in particular, the appropriate scoring rule. In particular, ambiguity can arise in payoff functions; doctors, for example, penalize false negative results differently when diagnosing younger versus older patients~\citep{mclaughlin2022algorithmic}.
Blackwell's comparison of signals \citep{blackwell1951comparison} is an ideal tool for addressing ambiguity about the payoff function, as it defines a signal $\sig_1$ as \textit{more informative} than $\sig_2$ if $\sig_1$ has a higher information value on all possible decision problems. 
We identify this partial order by decomposing the space of decision problems via a basis of proper scoring rules~\citep{li2022optimization, kleinberg2023u}.

\begin{definition}[Blackwell Order of Information]
    A signal $\sig_1$ is Blackwell more informative than $\sig_2$ if $\sig_1$ achieves a higher best-attainable payoff on any decision problems:
    \begin{equation*}
        \mathrm{R}^{\dgp, \score}
(\sig_1)\geq \mathrm{R}^{\dgp, \score}
(\sig_2), \forall \score
    \end{equation*}
\end{definition}
\noindent where $\mathrm{R}^{\dgp, \score}(\sig)$ denotes the expected performance of the rational DM on payoff function $\score$ and information structure $\dgp$ when observing $\sig$.

The Blackwell order is evaluated over all possible decision problems, which cannot be tested directly.
Fortunately, we only need to test over all proper scoring rules since any decision problem can be represented by an equivalent proper scoring rule, and the space of proper scoring rules can be characterized by a set of V-shaped scoring rules.
A V-shaped scoring rule is parameterized by the kink of the piecewise-linear utility function.


\begin{definition}(V-shaped scoring rule)
 \label{def:V-shaped score}
 A V-shaped scoring rule with kink $\kink\in (0, \frac{1}{2}]$ is defined as    \begin{equation}
      \score_{\kink} (\action, \payoffstatevalue) = \left\{\begin{array}{cc}
      \frac{1}{2} -\frac{1}{2}\cdot \frac{\payoffstatevalue - \kink}{1-\kink}  &  \text{if }\action\leq \kink\\
        \frac{1}{2} +\frac{1}{2}\cdot \frac{\payoffstatevalue - \kink}{1-\kink}    & \text{else},
      \end{array}
      \right.\nonumber
   \end{equation}

When $\kink'\in (\frac{1}{2}, 1)$, the V-shaped scoring rule can be symmetrically defined by $\score_{\kink'} = \score_{1-\kink'}(1-\pred, \payoffstatevalue)$.
\end{definition}

Intuitively, the kink $\kink$ represents the threshold belief where the decision-maker switches between two actions.
Larger $\mu$ means that the decision-makers will prefer $\action = 1$ more. The closer $\mu$ is to $0.5$, the more indifferent the decision-maker is to $\action = 0$ or $\action = 1$.

\Cref{prop: blackwell-V-test} shows that if $\sig_1$ achieves a higher information value on the basis of V-shaped proper scoring rules than $\sig_2$, then $\sig_1$ is Blackwell more informative than $\sig_2$. \Cref{prop: blackwell-V-test} follows from the fact that any best-responding payoff can be linearly decomposed into the payoff on V-shaped scoring rules. 

\begin{proposition}[\citealt{hu2024predict}]
\label{prop: blackwell-V-test}
If \(\forall \kink\in (0, 1)\) \begin{equation*}
    \mathrm{R}^{\dgp, \score_\kink}
(\sig_1)\geq \mathrm{R}^{\dgp, \score_\kink}
(\sig_2),
\end{equation*}
then $\sig_1$ is Blackwell more informative than $\sig_2$.
\end{proposition}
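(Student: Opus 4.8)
The plan is to pass from each signal to the distribution over posterior beliefs it induces, and to exploit the fact that the rational DM's value is the expectation of the convex \emph{Bayes envelope} of the scoring rule. Since the surrounding text already reduces an arbitrary decision problem to an equivalent proper scoring rule, it suffices to show that controlling $\mathrm{R}^{\dgp,\score_\kink}$ for every V-shaped rule controls $\mathrm{R}^{\dgp,\score}$ for every proper $\score$. Concretely, for binary state $\payoffstatevalue\in\{0,1\}$ I would write the rational value upon observing realization $\sigval$ as $U_\score(q_{\sigval})$, where $q_{\sigval}:=\dgp(\payoffstatevalue=1\mid \sigval)$ is the posterior and $U_\score(q):=\max_{\action\in\actionspace}\expect[\payoffstatevalue\sim q]{\score(\action,\payoffstatevalue)}$ is the Bayes envelope, which is convex in $q$ (and, for a proper rule, attained at the truthful report). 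Thus $\mathrm{R}^{\dgp,\score}(\sig)=\expect[\sigval]{U_\score(q_{\sigval})}=\expect[P_{\sig}]{U_\score}$, where $P_{\sig}$ is the distribution of posteriors induced by $\sig$. The key structural fact I will use is the martingale property $\expect[P_{\sig}]{q}=\dgp(\payoffstatevalue=1)=:\bar q$, so that any affine function of $q$ integrates to the same constant under $P_{\sig_1}$ and $P_{\sig_2}$.

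Next I would compute the Bayes envelope of the V-shaped rule. A direct calculation from \Cref{def:V-shaped score} (comparing the two candidate actions at belief $q$) gives
\[
U_{\score_\kink}(q)=\tfrac12+\tfrac{1}{2(1-\kink)}\,\lvert q-\kink\rvert ,
\]
a piecewise-linear convex function with a single kink at $q=\kink$. The central step is then the integral representation of an arbitrary convex $U_\score$ on $[0,1]$ as an affine term plus a nonnegative mixture of these kink functions,
\[
U_\score(q)=\alpha+\beta q+\int_0^1 \lvert q-\kink\rvert\, d\nu(\kink),
\]
for affine coefficients $\alpha,\beta$ and a nonnegative measure $\nu$ (essentially the second-derivative measure of $U_\score$). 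This is precisely the sense in which the V-shaped rules form a basis for the proper scoring rules, as asserted in the text.

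Finally I would combine these. Subtracting the two expectations and using that the affine part cancels under equal means, I obtain
\[
\mathrm{R}^{\dgp,\score}(\sig_1)-\mathrm{R}^{\dgp,\score}(\sig_2)=\int_0^1\Big(\expect[P_{\sig_1}]{\lvert q-\kink\rvert}-\expect[P_{\sig_2}]{\lvert q-\kink\rvert}\Big)\,d\nu(\kink).
\]
By hypothesis $\mathrm{R}^{\dgp,\score_\kink}(\sig_1)\ge \mathrm{R}^{\dgp,\score_\kink}(\sig_2)$ for every $\kink\in(0,1)$, and since the additive constant $\tfrac12$ and the positive factor $\tfrac{1}{2(1-\kink)}$ do not affect the sign, each integrand is nonnegative; as $\nu\ge 0$, the whole integral is nonnegative, giving $\mathrm{R}^{\dgp,\score}(\sig_1)\ge \mathrm{R}^{\dgp,\score}(\sig_2)$ for all $\score$, i.e.\ $\sig_1$ is Blackwell more informative than $\sig_2$.

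The main obstacle I anticipate is establishing the convex-function representation cleanly: I must justify the decomposition of $U_\score$ into the $\{\lvert q-\kink\rvert\}$ family (treating the endpoints $\kink\to 0,1$, any atoms of $\nu$, and the normalization of proper versus arbitrary scoring rules), and I must be careful that it is exactly the mean-preserving (martingale) property of posterior distributions that lets the affine components drop out—this is the precise point where Blackwell informativeness coincides with the convex (second-order stochastic dominance) order on induced posteriors. Extending beyond the binary state would require the multivariate analogue of this representation, but the statement as given concerns the binary V-shaped family, so the one-dimensional argument suffices.
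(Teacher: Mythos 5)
Your proof is correct and takes essentially the same route as the paper, whose entire argument is the one-line observation (credited to \citealt{hu2024predict}) that any best-responding payoff can be linearly decomposed into payoffs on V-shaped scoring rules: your representation $U_{\score}(q)=\alpha+\beta q+\int_0^1 \lvert q-\kink\rvert\,d\nu(\kink)$, with the martingale property of posteriors cancelling the affine part, is exactly that decomposition made explicit. You have simply supplied the details (the Bayes-envelope computation $U_{\score_\kink}(q)=\tfrac12+\tfrac{1}{2(1-\kink)}\lvert q-\kink\rvert$ and the second-derivative-measure representation) that the paper delegates to the citation.
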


Extending this to ACIV, $\sig_1$ offers a higher complementary value than $\sig_2$ under the Blackwell order if 
\[\ACIV^{\dgp, \score_\kink}(\sig_1; \actionvar^b) \geq \ACIV^{\dgp, \score_\kink}(\sig_2; \actionvar^b), \forall \kink\in(0,1)\]
This definition allows us to rank signals (or sets of signals) without needing to commit to a specific payoff function. 
We present a use case in \Cref{exp2}.

\section{Observational Study on the Effect of ACIV on Human-AI Team Performance}
\label{app:observational}

In this section, we present the results of an observational study on the effect of $\ACIV$ on human-AI team performance using the dataset from \citet{vodrahalli2022humans}.

\paragraph{Data description.}
We experiment with the publicly available Human-AI Interactions dataset \citep{vodrahalli2022humans}. 
The dataset comprises 34,783 unique predictions from 1,088 different human participants on four different binary prediction tasks (“Art”, “Sarcasm”, “Cities” and “Census”).
In each of the tasks, human participants provide confidence values about their predictions before ($\action^H$) and after ($\action^{H+AI}$) receiving AI advice from a classifier in the form of the classifier's confidence values ($\action^{AI}$).
$\action^H$, $\action^{H+AI}$ and $\action^{AI}$ are in the range of $[0, 1]$.

\paragraph{Results.}
We present the $\ACIV$ of the AI confidence values ($\action^{AI}$) over human decisions ($\action^H$) measured by the payoff function as the mean squared error (MSE) in \Cref{fig:observational_result}, and show the reduction of the mean squared error (MSE) of the human-AI team over the human-alone baseline in \Cref{tab:observational_result}.
We find that between these four tasks, the $\ACIV$ of the AI over human decisions predicts the improvement of the human-AI team performance over the human-alone baseline.

\begin{figure}[!h]
    \centering
    \includegraphics[width=\textwidth]{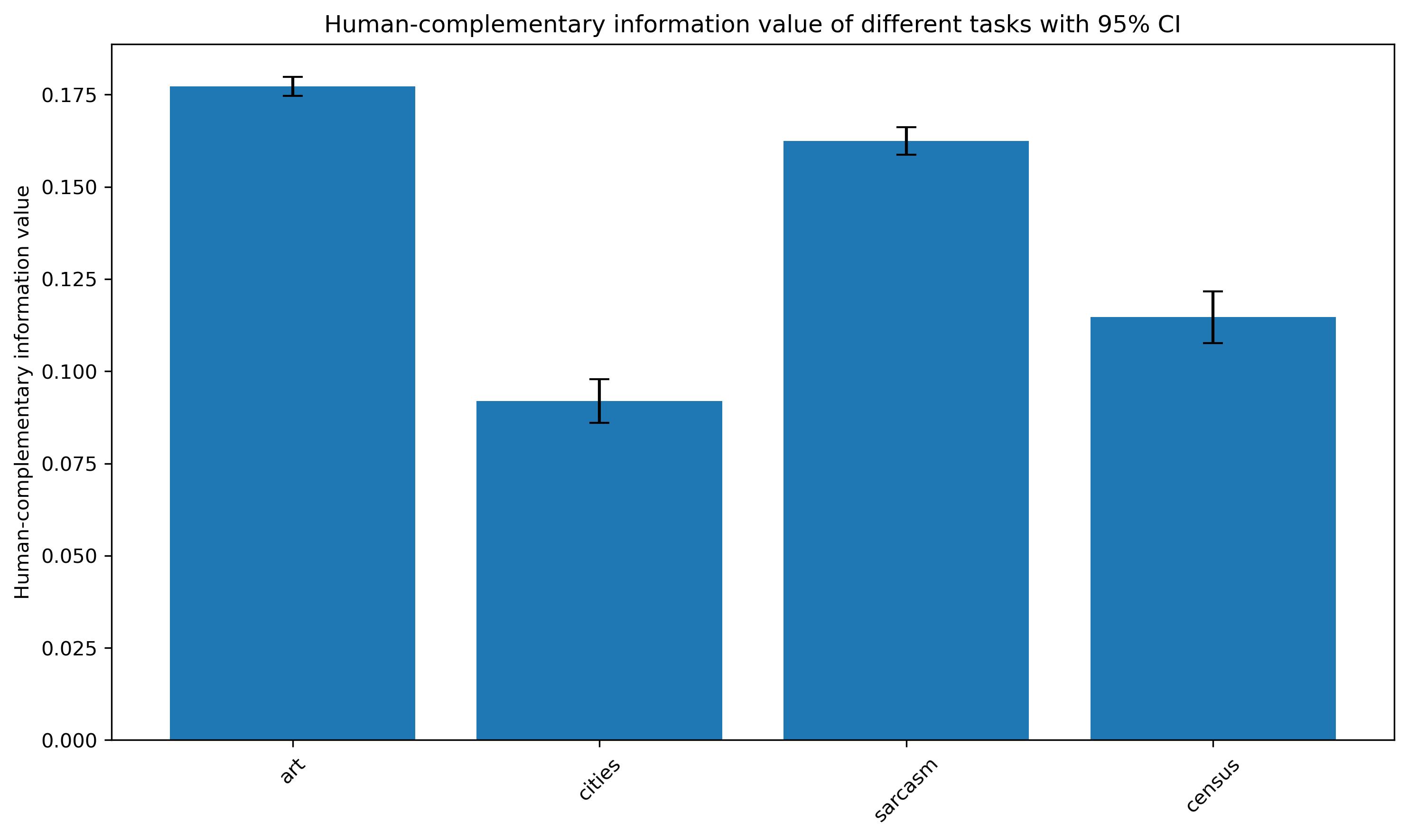}
    \caption{The $\ACIV$ of the AI confidence values over human decisions for different tasks. The error bars are 95\% confidence intervals.}
    \label{fig:observational_result}
\end{figure}

\begin{table}[ht]
    \centering
    \caption{The $\ACIV$ of the AI confidence values over human decisions and the reduction of mean squared error (MSE) of the human-AI team over the human-alone baseline for different tasks.}
    \begin{tabular}{lcccc}
        \toprule
        Task & Human-complementary Info & Human MSE & Human+AI MSE & Reduction in MSE \\
        \midrule
        Art & 0.1772 [0.1747, 0.1798] & 0.243 & 0.1847 & 0.0583 \\
        Cities & 0.0919 [0.0860, 0.0978] & 0.1955 & 0.1615 & 0.034 \\
        Sarcasm & 0.1625 [0.1587, 0.1662] & 0.2137 & 0.1767 & 0.037 \\
        Census & 0.1147 [0.1077, 0.1217] & 0.1997 & 0.1861 & 0.0136 \\
        \bottomrule
    \end{tabular}
    \label{tab:observational_result}
\end{table}

\section{Statistical Tests on Experiment Results}
\label{app:statistical_significance}

We calculate the p-values of the significance tests with Welch's t-test, with $\alpha = 0.05$.
The degrees of freedom ($\nu$) of Welch's t-test is calculated with the following formula:
\begin{equation}
    \label{eq:welch_df}
    \nu = \frac{(\frac{s_1^2}{N_1} + \frac{s_2^2}{N_2})^2}{\frac{s_1^2}{N_1^2(N_1-1)} + \frac{s_2^2}{N_2^2(N_2-1)}}
\end{equation}
where $s_1$ and $s_2$ are the standard deviations of the two samples, and $N_1$ and $N_2$ are the sizes of the two samples (see sample sizes and standard deviations in \Cref{tab:sample_size}).
We used Bonferroni correction as the multiple comparisons correction to control the overall type I error rate.
The results are shown in \Cref{tab:stat-significance} and \Cref{tab:p_values}.

\begin{table}[h]
    \centering
    \caption{95\% confidence intervals for the reduction of APE with different AI models and explanations in \Cref{fig:exp_result}.}
    \label{tab:stat-significance}
    \begin{tabular}{cccc}
        \toprule
        & No explanation & SHAP & ILIV-SHAP + SHAP \\
        \midrule
        AI1 & \cellb{5.96\%}{5.50\%}{6.42\%} & \cellb{5.88\%}{5.47\%}{6.28\%} & \cellb{6.94\%}{6.50\%}{7.38\%}\\
        \midrule
        AI2 & \cellb{6.33\%}{5.90\%}{6.76\%} & \cellb{6.22\%}{5.71\%}{6.72\%} & \cellb{5.31\%}{4.80\%}{5.83\%} \\
        \bottomrule
    \end{tabular}
\end{table}

\begin{table}[h]
    \centering
    \caption{P-values for significance tests of reduction in APE between experimental conditions (after Bonferroni correction).}
    \label{tab:p_values}
    \begin{tabular}{lc}
        \toprule
        Null Hypothesis ($H_0$) & P-value \\
        \midrule
        AI1 + ILIV-SHAP + SHAP < AI1 + SHAP & 0.002 \\
        AI1 + ILIV-SHAP + SHAP < AI1 + No explanation & 0.010 \\
        AI1 < AI2 & 0.555 \\
        AI1 + ILIV-SHAP + SHAP < AI2 + ILIV-SHAP + SHAP & <0.001 \\
        \bottomrule
    \end{tabular}
\end{table}

\begin{table}[h]
    \centering
    \caption{Sample sizes and standard deviations of the reduction of APE for each experimental condition}
    \label{tab:sample_size}
    \begin{tabular}{llccc}
        \toprule
        Model & Explanation & Participants & Observations (N) & Standard Deviation \\
        \midrule
        \multirow{3}{*}{AI1} & ILIV-SHAP + SHAP & 70 & 840 & 0.0653 \\
        & SHAP & 79 & 948 & 0.0642 \\
        & No explanation & 69 & 828 & 0.0668 \\
        \cmidrule(lr){2-5}
        & Total (all explanations) & 218 & 2,616 & 0.0656 \\
        \midrule
        \multirow{3}{*}{AI2} & ILIV-SHAP + SHAP & 67 & 804 & 0.0746 \\
        & SHAP & 62 & 744 & 0.0679 \\
        & No explanation & 74 & 888 & 0.0673 \\
        \cmidrule(lr){2-5}
        & Total (all explanations) & 203 & 2,436 &0.0700 \\
        \midrule
        Total & & 421 & 5,052 & 0.0678 \\
        \bottomrule
    \end{tabular}
\end{table}

\revisionmark{
\section{Cognitive Loads and Anchoring Effects in Experiment}
}

\revisionmark{
We also examine the spent time on task and the degree of anchoring on the human versus AI decisions by looking at the distance between human-AI team decisions and AI or human-alone decisions.
\paragraph{Time spent.}
\Cref{tab:exp_duration_anchoring} shows that there is no significant increase in the duration of the experiment for ILIV-SHAP (23.3 [13.1, 32.1] minutes for AI1 with ILIV-SHAP and 22.2 [11.9, 28.0] minutes for AI2 with ILIV-SHAP, where the square brackets show the 25\% and 75\% quantiles).
\paragraph{Anchoring effects.}
\Cref{tab:exp_duration_anchoring} also shows that ILIV-SHAP does not increase anchoring on the AI or human alone decisions, while the presence of SHAP alone tends to make the participants anchor more on the AI model. With no explanation, the participants anchor more on their own decisions in the first round.}

\begin{table}[h]
    \centering
     \caption{\revisionmark{The number of participants (after filtering based on the criteria in the pre-registration), the mean duration of the experiment, and the anchoring on AI and human for each condition. Anchoring on AI is calculated as the mean of the absolute difference between the human-AI team's prediction and the AI's prediction normalized by the actual sale price, $|d^{\text{Human-AI}} - d^{\text{AI}}| / \payoffstatevalue$, and anchoring on human is calculated as the mean of the absolute difference between the human-AI team's prediction and the human's prediction normalized by the actual sale price, $|d^{\text{Human-AI}} - d^{\text{Human}}| / \payoffstatevalue$. The square brackets indicate the 25th and 75th percentiles.}}
    \resizebox{\linewidth}{!}{
    \begin{tabular}{lccccc}
    
    \toprule
    \textbf{Condition} & \textbf{\# of participants} & 
    \textbf{Mean Duration (minutes)} &
    \textbf{Anchoring on AI ($\downarrow$)} &
    \textbf{Anchoring on Human ($\downarrow$)} \\
    \midrule
    
    AI1 + ILIV-SHAP+SHAP & 70 &
    \textbf{23.3} [13.1, \textbf{32.1}] &
    0.303 [0.0277, 0.253] &
    0.404 [0.0746, 0.491] \\
    
    AI1 + SHAP & 79 &
    21.3 [12.0, 27.9] &
    \textbf{0.211} [\textbf{0.0201}, \textbf{0.222}] &
    0.427 [\textbf{0.0721}, 0.533] \\
    
    AI1 + No Explanation & 69 &
    22.4 [\textbf{14.5}, 28.4] &
    0.226 [0.0276, 0.258] &
    \textbf{0.367} [0.0740, \textbf{0.483}] \\
    \midrule
    
    AI2 + ILIV-SHAP+SHAP & 67 &
    22.2 [11.9, 28.0] &
    0.206 [0.0273, 0.260] &
    0.458 [0.0840, 0.613] \\
    
    AI2 + SHAP & 62 &
    \textbf{23.1} [\textbf{13.2}, 27.0] &
    \textbf{0.203} [\textbf{0.0215}, \textbf{0.224}] &
    0.456 [0.0777, 0.602] \\
    
    AI2 + No Explanation & 74 &
    22.5 [12.7, \textbf{30.5}] &
    0.259 [0.0265, 0.260] &
    \textbf{0.434} [\textbf{0.0611}, \textbf{0.501}] \\
    \bottomrule
    \end{tabular}
    }
   
    \label{tab:exp_duration_anchoring}
\end{table}


\section{Demonstration I: Model Comparison on Chest Radiograph Diagnosis}
\label{exp2}
We apply our framework to a well-known cardiac dysfunction diagnosis task~\citep{rajpurkar2018deep, tang2020automated, shreekumar2025x}.
We demonstrate how our framework can be used in model evaluation for analyzing how much complementary information value a set of possible AI models offers to the radiology reports written by experts.

\begin{figure}
    \centering
    \includegraphics[width=0.8\linewidth]{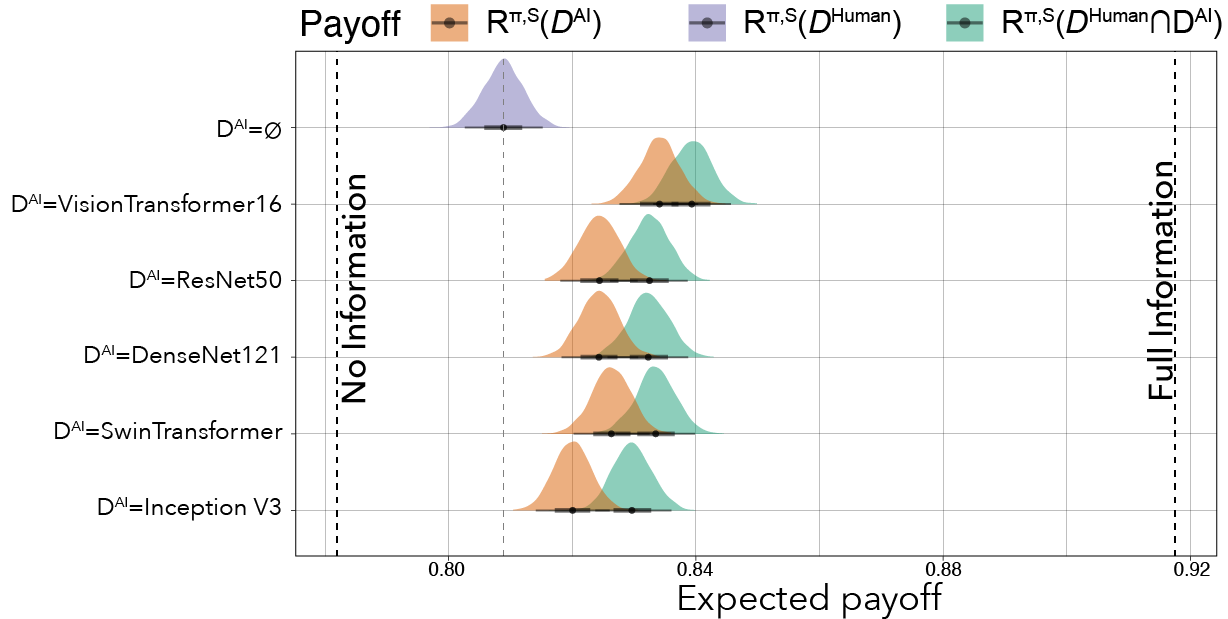}
    \caption{Information value of all deep-learning models calculated under our framework. The first row represents the human-alone decisions (without considering any AI predictions as additional signals). The other rows are the combinations of the human-alone decisions and the AI predictions from different pre-trained models. We list the AI predictions alone to show the AI-complementary information value offered by human decisions.}
    \label{fig:mimic-iv-analysis}
\end{figure}
\subsection{Data and Model}
We use data from the MIMIC dataset~\citep{goldberger2000physiobank}, which contains anonymized electronic health records from Beth Israel Deaconess Medical Center (BIDMC), a large teaching hospital in Boston, Massachusetts, affiliated with Harvard Medical School.
Specifically, we utilize chest x-ray images and radiology reports from the MIMIC-CXR database~\citep{johnson2019mimic} merged with patient and visit information from the broader MIMIC-IV database~\citep{johnson2023mimic}.
The payoff-related state, cardiac dysfunction $\payoffstatevalue\in \{0,1\}$, is coded based on two common tests, the NT-proBNP and the troponin, using the age-specific cutoffs from \citet{mueller2019heart} and \citet{heidenreich20222022}.
We label the radiology reports by a rule-based tool~\citep{irvin2019chexpert} and use the labels as the human decisions (without AI assistance) in the diagnosis task to solve the problem of computational feasibility with high-dimensional textual reports.
The labels are represented by the symptoms as positive, negative, or uncertain, i.e., $\action \in \{+, ?, -\}$.
We fine-tuned five deep-learning models on the cardiac dysfunction diagnosis task, VisionTransformer~\citep{alexey2020image}, SwinTransformer~\citep{liu2021swin}, ResNet~\citep{he2016deep}, Inception-v3~\citep{szegedy2016rethinking}, and DenseNet~\citep{huang2017densely}.
Our training set contains 12,228 images, and the validation set contains 6,115 images.
On a hold-out test set with 12,229 images, the AUC achieved by the five models is: DenseNet with $0.77$, Inception v3 with $0.76$, ResNet with $0.77$, SwinTransformer with $0.78$, and VisionTransformer with $0.80$.

We consider Brier score, a.k.a., quadratic score, as the payoff function: $\score(\payoffstatevalue, \action) = 1 - (\payoffstatevalue - \action)^2$. The scale of the quadratic score is $[0, 1]$ and a random guess ($\action \sim \text{Bernoulli}(0.5)$) achieves $0.75$ payoff. We use the quadratic score instead of the mean absolute error that is usually used in cardiac dysfunction diagnosis task because the quadratic score is a proper scoring rule where truthfully reporting the belief maximizes the payoff\footnote{We prefer a proper scoring rule so that the rational decision-maker’s strategy is to reveal their true belief, ensuring that the signal’s information value accurately reflects its role in forming beliefs.}. We also conduct a robust analysis considering various V-shaped payoff functions with different kinks on a discretized grid of $[0, 1]$ with a step of $0.01$.
We use the hold-out test set to estimate the data-generating process, which defines the joint distribution of state, human decisions, and AI models' predictions.

We construct the scale of performances by a no-information bound and a full-information bound.
The no-information bound is $\mathrm{R}^{\dgp, \score}(\emptyset)$, the baseline as we define the information value.
The full-information bound is defined as the expected payoff of a rational DM who has access to all signals, human label from radiology report and predictions from five AI models.

\subsection{Results}

\paragraph{Can the AI models complement human judgment?} 
We first analyze the agent-complementary information values in \Cref{fig:mimic-iv-analysis}, using the Brier score as the payoff function.
We find that all AI models provide complementary information value to the aforementioned human judgment.
As shown in \Cref{fig:mimic-iv-analysis} (comparison between \textcolor{humanaidecision}{$\mathrm{R}^{\dgp, \score}(\actionvar^{\text{Human}} \cup \actionvar^{\text{AI}})$} and \textcolor{humandecision}{$\mathrm{R}^{\dgp, \score}(\actionvar^{\text{Human}})$}), all AI models capture at least $20\%$ of the total available information value (across all AI model and human decisions) that is not exploited by human decisions. This motivates deploying an AI to assist humans in this scenario.

In the other direction, the human decisions also provide complementary information to all AI models, comparing \textcolor{humandecision}{$\mathrm{R}^{\dgp, \score}(\actionvar^{\text{Human}})$} with \textcolor{aidecision}{$\mathrm{R}^{\dgp, \score}(\actionvar^{\text{AI}})$} in \Cref{fig:mimic-iv-analysis}.
This observation might inspire, for example, further investigation of the information the humans can access to that is not represented in AI training data.

\paragraph{Which AI model offers the most decision-relevant information over human judgments?} 
\Cref{fig:mimic-iv-analysis} shows that VisionTransformer contains slightly higher information value than the other models, and Inception v3 contains slightly lower information value than the other models.
We assess the stability of VisionTransformer's superiority over the other AI models across many possible losses to test if there is a Blackwell ordering of models.
By \Cref{prop: blackwell-V-test}, we test the payoff of models on all V-shaped scoring rules, shown in \Cref{fig:robust-analysis}.
Across all the V-shaped payoff functions, we find that VisionTransformer is Blackwell more informative and Inception v3 is Blackwell less informative than all other models. 
The VisionTransformer achieves a higher information value on all V-shaped scoring rules, implying a higher information value on all decision problems.


\section{Demonstration II: Behavioral Analysis on Deepfake Detection}
\label{exp1}
We apply our framework to analyze a deepfake video detection task~\citep{dolhansky2020deepfake}, where participants are asked to judge whether a video was created by generative AI, including with the assistance of an AI model.

\begin{figure}
    \centering
    \includegraphics[width=0.8\linewidth]{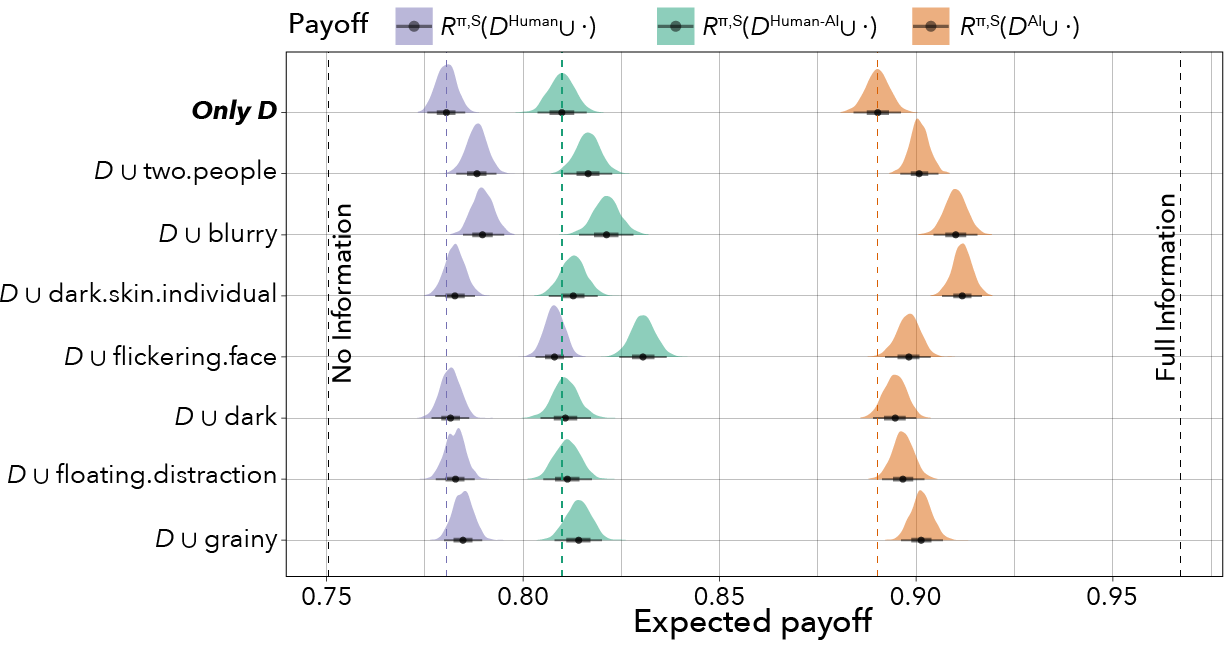}
    \caption{Information value calculated under our framework in the information model defined by the experiment of \citet{groh2022deepfake}. Basic signals include the seven video level features and three types of agent decisions. The baseline on the left represents the expected payoff given no information, i.e., $\mathrm{R}^{\dgp, \score}(\emptyset)$, and the benchmark on the right represents the expected payoff given all available information, i.e., $\mathrm{R}^{\dgp, \score}(\{\basicsig_1,\ldots,\basicsig_n, \actionvar^{\text{Human}}, \actionvar^{\text{Human-AI}}, \actionvar^{\text{AI}}\})$. All the payoffs are calculated by $\mathrm{R}^{\dgp, \score}(\cdot)$, where $\cdot$ is the signal on the \textit{y}-axis.
    }
    \label{fig:analysis_groh}
\end{figure}
\subsection{Data and Model}
We define the information model on the experiment data of \citet{groh2022deepfake}. 
Non-expert participants (n=5,524) were recruited through Prolific and asked to examine the videos. 
They reported their decisions in two rounds.
They first reviewed the video and reported an \textit{initial} decision ($\actionvar^{\text{Human}}$) without access to the AI model. Then, in a second round, they were provided with the recommendation ($\actionvar^{\text{AI}}$) of a multitask cascaded convolutional neural network~\citep{zhang2016joint}, with estimated $65\%$ accuracy on holdout data, and chose whether to change their initial decision. 
This produced a \textit{final} decision ($\actionvar^{\text{Human-AI}}$).
Both decisions were elicited as a percentage indicating how confident the participant was that the video was a deepfake, measured in $1\%$ increments: $\action \in \{0\%, 1\%, \ldots, 100\%\}$.
We round the predictions from the AI model to the same 100-scale probability scale available to study participants.

We use the Brier score as the payoff function, 
with the binary payoff-related state: $\payoffstatevalue\in \{0,1\} = \{\text{genuine}, \text{fake}\}$.
This choice differs from the mean absolute error used by \citet{groh2022deepfake}, but again we use the quadratic score because it is a proper scoring rule where truthfully reporting the belief maximizes the score.


We identify a set of features that were implicitly available to all three agents (human, AI, and human-AI). Because the video signal is high dimensional, we make use of seven video-level features using manually coded labels by \citet{groh2022deepfake}: graininess, blurriness, darkness, presence of a flickering face, presence of two people, presence of a floating distraction, and the presence of an individual with dark skin, all of which are labeled as binary indictors.
These are the basic signals in our framework. 
We estimate the data-generating process $\dgp$ using the realizations of signals, state, first-round human decisions, AI predictions, and second-round human-AI decisions.
The no-information bound is the same as \Cref{exp2} while the full-information bound is defined as the expected payoff of a rational DM who has access to all video-level features and three agents' decisions.

\subsection{Results}

\paragraph{How much decision-relevant information do agents' decisions offer?}
We first compare the information value of the AI predictions to that of the human decisions in the first round (without AI assistance).
\Cref{fig:analysis_groh}(a) shows that \textbf{\textcolor{aidecision}{AI predictions}} provide about $65\%$ of the total possible information value over the no-information baseline, while \textbf{\textcolor{humandecision}{human decisions}} only provide about $15\%$.
Because the no information baseline, $0.75$, is equivalent to a random guess drawn from $\text{Bernoulli}(0.5)$, human decisions are only weakly informative for the problem.

We next consider the \textbf{\textcolor{humanaidecision}{human-AI decisions}}. 
Given that the AI predictions contain a significant portion of the total possible information value, we might hope that when participants have access to the AI predictions, their performance will be close to the full information baseline.
However, the information value of the \textbf{\textcolor{humanaidecision}{human-AI decisions}} only achieves a small proportion of the total possible information value ($30\%$). 
This is consistent with the findings of \citet{guo2024decision} that humans are bad at distinguishing when AI predictions are correct.

\paragraph{How much additional decision-relevant information do the available features offer over agents' decisions?}
To understand what information might improve human decisions, we assess the $\ACIV$s of different signals over different agents. This describes the additional information value in the signal after conditioning on the existing information in the agents' decisions.
As shown on the fifth row in \Cref{fig:analysis_groh}, the presence of a flickering face offers larger $\ACIV$ over human decisions than over AI predictions, meaning that human decisions could improve by a greater amount if they were to incorporate this information. 
Meanwhile, as shown on the fourth row in \Cref{fig:analysis_groh}, the presence of an individual with dark skin offers larger $\ACIV$ over AI predictions than over human decisions, suggesting that humans make greater use of this information.
This suggests that the AI and human rely on differing information to make their initial predictions, where the AI relies more on information associated with the presence of a flickering face while human participants rely more on information associated with the presence of an individual with dark skin.

By comparing the $\ACIV$s of different signals over human decisions and human-AI decisions, we also find that simply displaying AI predictions to humans did not lead to the AI-assisted humans exploiting the observed signals in their decisions.
As shown in \Cref{fig:analysis_groh}, with the assistance of AI, the $\ACIV$s of all signals over the human-AI teams' decisions do not change much compared to the $\ACIV$s over human decisions, with the exception of a slight improvement in the presence of a flickering face.
This finding further confirms the hypothesis that humans are simply relying on AI predictions without processing the information contained in them.
\revisionmark{
\section{Sensitivity Analysis of the Rational Belief Estimator}
\label{app:rational_belief_estimator}

In this section, we first present a theoretical analysis on the quality of the ACIV estimator by connecting the calibration error of the given  $\mathcal{A}$.
Then, we present an empirical analysis on the chest X-ray diagnosis demonstration with different algorithms (linear regression (LR), gradient boosting methods (GBM), and neural network (NN)) as $\mathcal{A}$.

\subsection{Theoretical Analysis}

In this section, we theoretically show how the quality of the ACIV estimator (\Cref{alg:aciv}) is related to the calibration error of the rational belief estimator $a = \mathcal{A}(\{\sigval_i, \action^b_i, \payoffstatevalue_i\}_{i=1}^n)$.
We use the regret of the decision maker to represent the quality of the estimation on the rational decision rules, i.e., how much payoff can be improved if we correct the decisions in hindsight\footnote{We use the notion of regret to quantify the quality of our ACIV estimator because the perfect decisions are usually unidentifiable with finite signals.}.
Because the ACIV is defined as the improvement of the best-attainable performance with the signal and the agent decision over the one with the agent decision alone, the decisions $\hat{\action}^r$ and $\hat{\action}^{rb}$ in \Cref{alg:aciv} should have as small regret as possible.
We first derive our definition of the regret of \Cref{alg:aciv} using the swap regret by \citet{roth2024forecasting}.

\begin{definition}[Swap Regret \citep{roth2024forecasting}]
    Given a set of observations $\{(\sigval_i, \action^b_i, \payoffstatevalue_i)\}_{i=1}^n$, a decision rule $\action(\cdot)$, and a decision task with payoff function $\score$, the swap regret of the DM is:
    \begin{equation}
        \textsc{Swap}_{\score}(\action, \{(\sigval_i, \action^b_i, \payoffstatevalue_i)\}_{i=1}^n) = \frac{1}{n} \max_{\sigma: \actionspace \rightarrow \actionspace} \sum_{i=1}^n \left[ \score(\sigma(\action(\sigval_i, \action^b_i)), \payoffstatevalue_i) - \score(\action(\sigval_i, \action^b_i), \payoffstatevalue_i) \right]
    \end{equation}
\end{definition}

The swap function $\sigma$ is a permutation of the action space $\actionspace$ that maps the action of $\action^r(\sigval_i, \action^b_i)$ to the another action.
We define the regret of the ACIV estimator as the difference between the ACIV under the best-responding decision rules---$\action^{*r}$ and $\action^{*rb}$---and the ACIV under the estimated decision rules---$\hat{\action}^r$ and $\hat{\action}^{rb}$.

\begin{definition}[Regret of the ACIV estimator]
    Given an empirical distribution as the data-generating process, $\dgp = \text{Uniform}(\{\sigval_i, \action^b_i, \payoffstatevalue_i\}_{i=1}^n)$, the estimated decision rules $\hat{\action}^r: \sigsp \times \actionspace \rightarrow \actionspace$ and $\hat{\action}^{rb}: \actionspace \rightarrow \actionspace$ from \Cref{alg:aciv}, and a decision task with payoff function $\score$, the regret of the ACIV estimator is:
    \begin{equation}
        \textsc{RegACIV}_{\score, \dgp}(\hat{\action}^r, \hat{\action}^{rb}; \sig, \actionvar^b) = \ACIV^{\dgp, \score, \action^{*r}, \action^{*rb}}(\sig; \actionvar^b) - \ACIV^{\dgp, \score, \hat{\action}^r, \hat{\action}^{rb}}(\sig; \actionvar^b)
    \end{equation}
    where $\ACIV^{\dgp, \score, \hat{\action}^r, \hat{\action}^{rb}}$ denotes the $\ACIV$ estimated under the rational decision rules $\hat{\action}^r$ and $\hat{\action}^{rb}$.
    $\action^{*r}$ and $\action^{*rb}$ are the optimal decision rules that we can get from $\hat{\action}^r$ and $\hat{\action}^{rb}$:
    \begin{align*}
        \action^{*r}(\cdot, \cdot) = \sigma^r(\hat{\action}^r(\cdot, \cdot)) \text{ where } & \sigma^r = \argmax_{\sigma: \actionspace \rightarrow \actionspace}  \sum_{i=1}^n \left[ \score(\sigma(\hat{\action}^r(\sigval_i, \action^b_i)), \payoffstatevalue_i) \right] \\
        \action^{*rb}(\cdot) = \sigma^{rb}(\hat{\action}^{rb}(\cdot)) \text{ where } & \sigma^{rb} = \argmax_{\sigma: \actionspace \rightarrow \actionspace}  \sum_{i=1}^n \left[ \score(\sigma(\hat{\action}^{rb}(\action^b_i)), \payoffstatevalue_i) \right]
    \end{align*}
\end{definition}

\begin{lemma}
\label{lemma:regaciv}
    The regret of the ACIV estimator is upper bounded by the swap regret of $\hat{\action}^r$:
    \begin{equation}
        \textsc{RegACIV}_{\score, \dgp}(\hat{\action}^r, \hat{\action}^{rb}; \sig, \actionvar^b) \leq \textsc{Swap}_{\score}(\hat{\action}^r, \{(\sigval_i, \action^b_i, \payoffstatevalue_i)\}_{i=1}^n)
    \end{equation}
\end{lemma}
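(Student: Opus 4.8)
The plan is to unpack both ACIV terms on the empirical data-generating process $\dgp = \text{Uniform}(\{\sigval_i, \action^b_i, \payoffstatevalue_i\}_{i=1}^n)$ and show that $\textsc{RegACIV}$ decomposes \emph{exactly} into the swap regret of $\hat{\action}^r$ minus the swap regret of $\hat{\action}^{rb}$; non-negativity of the latter then delivers the inequality. First I would note that on the empirical distribution $\mathrm{R}$ is just the sample average of the payoff under the given rule, so for any pair $(\action^r, \action^{rb})$,
\[
\ACIV^{\dgp, \score, \action^r, \action^{rb}}(\sig; \actionvar^b) = \frac{1}{n}\sum_{i=1}^n \score(\action^r(\sigval_i, \action^b_i), \payoffstatevalue_i) - \frac{1}{n}\sum_{i=1}^n \score(\action^{rb}(\action^b_i), \payoffstatevalue_i).
\]
Substituting this for the pairs $(\action^{*r}, \action^{*rb})$ and $(\hat{\action}^r, \hat{\action}^{rb})$ and regrouping, the signal terms and the agent-only terms separate, giving
\[
\textsc{RegACIV} = \underbrace{\tfrac{1}{n}\sum_i \big[\score(\action^{*r}(\sigval_i,\action^b_i),\payoffstatevalue_i) - \score(\hat{\action}^r(\sigval_i,\action^b_i),\payoffstatevalue_i)\big]}_{(\mathrm{I})} - \underbrace{\tfrac{1}{n}\sum_i \big[\score(\action^{*rb}(\action^b_i),\payoffstatevalue_i) - \score(\hat{\action}^{rb}(\action^b_i),\payoffstatevalue_i)\big]}_{(\mathrm{II})}.
\]

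Next I would identify term $(\mathrm{I})$ with $\textsc{Swap}_{\score}(\hat{\action}^r, \cdot)$. By definition $\action^{*r} = \sigma^r \circ \hat{\action}^r$, where $\sigma^r$ maximizes $\sum_i \score(\sigma(\hat{\action}^r(\sigval_i,\action^b_i)), \payoffstatevalue_i)$. Because the subtracted sum $\sum_i \score(\hat{\action}^r(\sigval_i,\action^b_i),\payoffstatevalue_i)$ does not depend on $\sigma$, this argmax is unchanged if we instead maximize the \emph{gap} $\sum_i[\score(\sigma(\hat{\action}^r(\sigval_i,\action^b_i)),\payoffstatevalue_i) - \score(\hat{\action}^r(\sigval_i,\action^b_i),\payoffstatevalue_i)]$. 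Hence $(\mathrm{I})$ equals exactly $\frac{1}{n}\max_\sigma$ of that gap, which is $\textsc{Swap}_{\score}(\hat{\action}^r, \{(\sigval_i,\action^b_i,\payoffstatevalue_i)\}_{i=1}^n)$. The identical argument applied to $\action^{*rb} = \sigma^{rb} \circ \hat{\action}^{rb}$ shows $(\mathrm{II}) = \textsc{Swap}_{\score}(\hat{\action}^{rb}, \{(\action^b_i,\payoffstatevalue_i)\}_{i=1}^n)$ on the agent-only observations.

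Finally I would invoke non-negativity of swap regret: since the identity map is a valid swap function and yields summand $0$, the maximum defining $(\mathrm{II})$ is at least the value attained at $\sigma = \mathrm{id}$, namely $0$. Dropping the non-negative term $(\mathrm{II})$ gives $\textsc{RegACIV} = (\mathrm{I}) - (\mathrm{II}) \leq (\mathrm{I}) = \textsc{Swap}_{\score}(\hat{\action}^r, \{(\sigval_i,\action^b_i,\payoffstatevalue_i)\}_{i=1}^n)$, which is precisely the claim.

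The argument is essentially an identity followed by one sign observation, so there is no deep obstacle. The single point requiring care is verifying that the optimal-rule construction in the definition of $\action^{*r}$ (an argmax over raw payoff) coincides with the swap-regret maximizer (an argmax over the payoff gap); this hinges only on the additively-separable, $\sigma$-independent constant dropping out of the argmax, which is immediate but worth stating explicitly so that the identification $(\mathrm{I}) = \textsc{Swap}_\score(\hat{\action}^r,\cdot)$ is airtight. Everything else is bookkeeping on the empirical average.
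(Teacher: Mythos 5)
Your proof is correct and takes essentially the same route as the paper's: both decompose $\textsc{RegACIV}$ exactly into the empirical swap regret of $\hat{\action}^r$ minus that of $\hat{\action}^{rb}$, identify the first term with $\textsc{Swap}_{\score}(\hat{\action}^r,\cdot)$, and drop the second term by non-negativity. If anything, you are more explicit than the paper on the two steps it leaves implicit---that the identity swap witnesses $(\mathrm{II})\geq 0$, and that the $\sigma$-independent subtracted sum drops out of the argmax so the optimal-rule construction and the swap-regret maximizer coincide.
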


\begin{proof}
    \begin{equation}
        \begin{aligned}
        \textsc{RegACIV}_{\score, \dgp}(\hat{\action}^r, \hat{\action}^{rb}; \sig, \actionvar^b) 
        &= \expect[(\sigval, \action^b, \payoffstatevalue) \sim \dgp]{\score(\sigma^r(\hat{\action}^r(\sigval, \action^b)), \payoffstatevalue_i)} - \expect[(\sigval, \action^b, \payoffstatevalue) \sim \dgp]{\score(\hat{\action}^r(\sigval, \action^b), \payoffstatevalue_i)} - \\
        & \quad \left(\expect[(\action^b, \payoffstatevalue) \sim \dgp]{\score(\sigma^{rb}(\hat{\action}^r(\action^b_i)), \payoffstatevalue_i)} - \expect[(\action^b, \payoffstatevalue) \sim \dgp]{\score(\hat{\action}^r(\action^b_i), \payoffstatevalue_i)}\right)\\
        & \leq \expect[(\sigval, \action^b, \payoffstatevalue) \sim \dgp]{\score(\sigma^r(\hat{\action}^r(\sigval, \action^b)), \payoffstatevalue_i)} - \expect[(\sigval, \action^b, \payoffstatevalue) \sim \dgp]{\score(\hat{\action}^r(\sigval, \action^b), \payoffstatevalue_i)}\\
        &= \max_{\sigma: \actionspace \rightarrow \actionspace} \left[ \frac{1}{n} \sum_{i=1}^n \left[ \score(\sigma(\hat{\action}^r(\sigval_i, \action^b_i)), \payoffstatevalue_i) - \score(\hat{\action}^r(\sigval_i, \action^b_i), \payoffstatevalue_i) \right] \right]\\
        &= \textsc{Swap}_{\score}(\hat{\action}^r, \{(\sigval_i, \action^b_i, \payoffstatevalue_i)\}_{i=1}^n)
        \end{aligned}
    \end{equation}
\end{proof}

\begin{claim}[\citet{kleinberg2023u}, Theorem 12]
\label{clm:ece_swap}
    Given a decision task with payoff function $\score: \actionspace \times \payoffstatespace \rightarrow [0, 1]$, if the DM responds by taking $\action(\sigval, \action^b) = \action^*(a(\sigval, \action^b))$, where $a$ is an estimator of the probability of the payoff state given the signal and action and $\action^*(p) = \argmax_{\action \in \actionspace} \expect[\payoffstatevalue \sim p]{\score(\action, \payoffstatevalue)}$ is the best response to the probability $p$, the swap regret of the DM is bounded by the expected calibration error (ECE) of the estimator $a$:
    \begin{equation}
        \textsc{Swap}_{\score}(\hat{\action}^r, \{(\sigval_i, \action^b_i, \payoffstatevalue_i)\}_{i=1}^n) \leq 2 \text{ECE}(a, \{(\sigval_i, \action^b_i, \payoffstatevalue_i)\}_{i=1}^n)
    \end{equation}
\end{claim}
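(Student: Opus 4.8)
\emph{Proof proposal.} The plan is to reduce the swap regret to the expected calibration error by grouping instances according to the value of the forecast $a(\sigval_i, \action^b_i)$ and exploiting the affine structure of the expected payoff within each group. Write $p_i := a(\sigval_i, \action^b_i)$ for the predicted distribution over the binary state, identified with the predicted probability that $\payoffstatevalue_i = 1$, and recall that the decision maker plays $\action^*(p_i)$. For an action $\action\in\actionspace$ and a probability $p\in[0,1]$, define the affine expected-payoff function
\begin{equation*}
    U_\action(p) := p\,\score(\action, 1) + (1-p)\,\score(\action, 0),
\end{equation*}
so that $\action^*(p) = \argmax_{\action\in\actionspace} U_\action(p)$ and the slope $\score(\action,1)-\score(\action,0)$ lies in $[-1,1]$ because $\score$ takes values in $[0,1]$.

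First I would rewrite the swap regret by partitioning the $n$ instances into groups sharing a common forecast value $p$. Because the maximizing swap $\sigma$ may be chosen independently on each group, the swap regret equals $\frac1n\sum_p \max_{\action'} \sum_{i:\,p_i = p}\bigl[\score(\action',\payoffstatevalue_i) - \score(\action^*(p),\payoffstatevalue_i)\bigr]$. Let $n_p$ be the size of this group and $\rho_p$ the empirical frequency of $\payoffstatevalue = 1$ within it; averaging the scores over the group turns each inner sum into $n_p\,[U_{\action'}(\rho_p) - U_{\action^*(p)}(\rho_p)]$, and maximizing over $\action'$ gives $n_p\,[U_{\action^*(\rho_p)}(\rho_p) - U_{\action^*(p)}(\rho_p)]$, where $\action^*(\rho_p)$ is the truly optimal action for the group frequency.

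The core step is a telescoping bound on $U_{\action^*(\rho_p)}(\rho_p) - U_{\action^*(p)}(\rho_p)$. Inserting the evaluations at $p$ and using the optimality of $\action^*(p)$ at $p$ to discard the nonpositive cross term $U_{\action^*(\rho_p)}(p) - U_{\action^*(p)}(p)$, I would bound this difference by $[U_{\action^*(\rho_p)}(\rho_p) - U_{\action^*(\rho_p)}(p)] + [U_{\action^*(p)}(p) - U_{\action^*(p)}(\rho_p)]$. Each bracket is the change of a single affine function $U_\action$ between $p$ and $\rho_p$, hence bounded in absolute value by $|\rho_p - p|$ by the unit slope bound. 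Summing the resulting $2\,n_p\,|\rho_p - p|$ over groups and dividing by $n$ yields exactly $2\sum_p \tfrac{n_p}{n}|\rho_p - p| = 2\,\text{ECE}(a, \{(\sigval_i, \action^b_i, \payoffstatevalue_i)\}_{i=1}^n)$, which is the claim.

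The main obstacle I anticipate is the bookkeeping in the grouping-and-telescoping step: one must be careful that $\sigma$ acts on the realized actions, so that grouping by forecast value—on which the best response is constant—is what makes the per-group maximization separable, and that the cross term dropped via optimality is genuinely nonpositive. The Lipschitz estimate itself is immediate from $\score\in[0,1]$, and the binary-state argument above extends to finite $\payoffstatespace$ by replacing $|\rho_p - p|$ with an appropriate norm of the discrepancy between the forecast and the group-conditional distribution, at the cost of tracking the payoff range across coordinates.
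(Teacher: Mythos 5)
Your proof is correct and follows essentially the same route as the source the paper cites for this claim: the paper imports the statement from \citet{kleinberg2023u} (Theorem 12) without reproving it, and your argument---grouping instances by forecast value, writing the within-group payoff as the affine function $U_\action(p) = p\,\score(\action,1)+(1-p)\,\score(\action,0)$, discarding the nonpositive cross term via optimality of $\action^*(p)$ at $p$, and applying the unit-slope bound from $\score\in[0,1]$ to get $2\,n_p\,\lvert\rho_p-p\rvert$ per group---is exactly the standard proof, and it matches the paper's (implicitly binary-state) definition of ECE, namely $\sum_p \frac{n_p}{n}\lvert \rho_p - p\rvert$. One small correction: the grouped quantity $\frac{1}{n}\sum_p \max_{\action'}\sum_{i:\,p_i=p}\bigl[\score(\action',\payoffstatevalue_i)-\score(\action^*(p),\payoffstatevalue_i)\bigr]$ is an \emph{upper bound} on the swap regret rather than equal to it, since distinct forecasts $p\neq p'$ may induce the same action $\action^*(p)=\action^*(p')$ and $\sigma$ must then reassign both groups identically while your per-group maximization need not---but the inequality runs in the direction you need, so the conclusion stands.
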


\begin{theorem}
\label{thm:aciv_ece}
    Given a decision task with bounded payoff $\score: \actionspace \times \payoffstatespace \rightarrow [M_1, M_2]$, the regret of the ACIV estimator is bounded by the expected calibration error of the estimator $\hat{a} = \mathcal{A}(\{\sigval_i, \action^b_i, \payoffstatevalue_i\}_{i=1}^n)$:
    \begin{equation}
        \textsc{RegACIV}_{\score, \dgp}(\hat{\action}^r, \hat{\action}^{rb}; \sig, \actionvar^b) \leq 2 (M_2 - M_1) \textsc{ECE}(\hat{a}, \{(\sigval_i, \action^b_i, \payoffstatevalue_i)\}_{i=1}^n)
    \end{equation}
\end{theorem}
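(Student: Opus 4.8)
The plan is to chain the two results already in hand---\Cref{lemma:regaciv} and \Cref{clm:ece_swap}---with a rescaling argument that adapts the claim from payoffs in $[0,1]$ to the general range $[M_1, M_2]$. Since \Cref{lemma:regaciv} already bounds $\textsc{RegACIV}_{\score, \dgp}(\hat{\action}^r, \hat{\action}^{rb}; \sig, \actionvar^b)$ by the swap regret $\textsc{Swap}_{\score}(\hat{\action}^r, \{(\sigval_i, \action^b_i, \payoffstatevalue_i)\}_{i=1}^n)$, it suffices to bound this swap regret by $2(M_2 - M_1)\,\textsc{ECE}(\hat{a}, \cdot)$. The one mismatch is that \Cref{clm:ece_swap} assumes a payoff valued in $[0,1]$, whereas here $\score$ takes values in $[M_1, M_2]$.

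To bridge this, I would introduce the normalized payoff $\tilde{\score}(\action, \payoffstatevalue) := (\score(\action, \payoffstatevalue) - M_1)/(M_2 - M_1)$, which takes values in $[0,1]$ and is thus a valid input to \Cref{clm:ece_swap}. The crucial point is that $\tilde{\score}$ is a positive affine transformation of $\score$, so the maximizers agree for every belief $p$: $\argmax_{\action} \expect[\payoffstatevalue \sim p]{\tilde{\score}(\action,\payoffstatevalue)} = \argmax_{\action} \expect[\payoffstatevalue \sim p]{\score(\action,\payoffstatevalue)}$. Hence the best-response map, and therefore the estimated rule $\hat{\action}^r$ built from $\hat{a}$ in \Cref{alg:aciv}, is identical whether computed under $\score$ or $\tilde{\score}$. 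This invariance is exactly what allows us to apply the claim to $\tilde{\score}$ while keeping the same $\hat{\action}^r$ and the same estimator $\hat{a}$.

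Finally I would track the rescaling of the swap regret and assemble the chain. Writing $\score = (M_2 - M_1)\tilde{\score} + M_1$ in the definition, the additive constant $M_1$ cancels inside each difference $\score(\sigma(\cdot),\payoffstatevalue_i) - \score(\cdot,\payoffstatevalue_i)$ and the nonnegative factor $(M_2 - M_1)$ pulls out of the maximization over swap maps, yielding the identity $\textsc{Swap}_{\score}(\hat{\action}^r, \cdot) = (M_2 - M_1)\,\textsc{Swap}_{\tilde{\score}}(\hat{\action}^r, \cdot)$. Since $\textsc{ECE}$ depends only on the predictions $\hat{a}$ and realized states and not on the payoff, it is unchanged under this rescaling. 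Applying \Cref{clm:ece_swap} to $\tilde{\score}$ gives $\textsc{Swap}_{\tilde{\score}}(\hat{\action}^r, \cdot) \leq 2\,\textsc{ECE}(\hat{a}, \cdot)$; multiplying by $(M_2 - M_1)$ and then composing with \Cref{lemma:regaciv} delivers the stated bound. The only step requiring genuine care is the affine-invariance of the best response---if rescaling changed $\hat{\action}^r$ we could not reuse the same estimator across the two problems---while the remainder is routine bookkeeping.
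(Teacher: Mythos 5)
Your proposal is correct and matches the paper's proof, which likewise normalizes the payoff to $[0,1]$ and chains \Cref{lemma:regaciv} with \Cref{clm:ece_swap}; your explicit verification that the positive affine rescaling leaves the best-response rule $\hat{\action}^r$ unchanged, scales the swap regret by exactly $(M_2 - M_1)$, and leaves $\textsc{ECE}$ untouched fills in precisely the bookkeeping the paper leaves implicit.
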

The proof normalizes the payoff in \Cref{lemma:regaciv} into $[0, 1]$ and then applies \Cref{clm:ece_swap}.

\Cref{thm:aciv_ece} shows that, in \Cref{alg:aciv}, when we choose a predictive algorithm $\mathcal{A}$ that yields low ECE, the ACIV estimated by \Cref{alg:aciv} is close to the ACIV under the optimal decision rules.

\subsection{Empirical Analysis}
In this section, we take the chest X-ray diagnosis task in \Cref{exp2} as an example to empirically analyze the estimation of the full information value by \Cref{alg:aciv} with different modeling approaches (linear regression (LR), gradient boosting methods (GBM), and neural network (NN)).

\paragraph{Task and data.}
We use the chest X-ray diagnosis task in \Cref{exp2} as the example.
The task is to predict the probability of the presence of a disease given a chest X-ray image.
The decision space is $\actionspace = \Delta [0, 1]$ and the payoff space is $\payoffstatespace = \{0, 1\}$.
The payoff function is $\score(\action, \payoffstatevalue) = 1 - (\action - \payoffstatevalue)^2$.
The signal space is a high-dimensional feature vector of the chest X-ray image.
We split the dataset that we used in \Cref{exp2} into a 70/30 train/test split.

\paragraph{Predictive Models.}
We use the following models:
\begin{itemize}
    \item Linear Regression (LR) from \texttt{scikit-learn} package.
    \item Extreme Gradient Boosting (GBM) from \texttt{xgboost} package with \texttt{n\_estimators = 100, max\_depth = 6, and learning\_rate = 0.3}.
    \item MLP classifier (NN) from \texttt{scikit-learn} package with 2 hidden layers of 256 and 64 neurons each.
\end{itemize}

\paragraph{Evaluation Metrics.}
We report the following evaluation metrics for each estimator:
\begin{itemize}
    \item Brier Score: $\frac{1}{n} \sum_{i=1}^n (\hat{p}_i - \payoffstatevalue_i)^2$
    \item Accuracy: $\frac{1}{n} \sum_{i=1}^n \mathbb{1}(\mathbb{1}(\hat{p}_i \geq 0.5) = \payoffstatevalue_i)$
    \item Expected Calibration Error (ECE): $\frac{1}{n} \sum_{i=1}^n \left| \hat{p}_i - \expect[]{\payoffstatevalue | \hat{p}_i} \right|$
\end{itemize}

\begin{table}[h!]
    \centering
    \caption{Performance of different  algorithms for ACIV estimation in the chest X-ray diagnosis task. Metrics for the estimators include Brier Score, Accuracy, F1-Score, and Expected Calibration Error (ECE). We report the estimated $\ACIV(V; \emptyset)$ for each estimator with \Cref{alg:aciv}.}
    \label{tab:aciv-estimation}
    \begin{tabular}{lcccc}
        \toprule
        \textbf{Model} & \textbf{Brier Score} $\uparrow$ & \textbf{Accuracy} $\uparrow$ & \textbf{ECE} $\downarrow$ & $\mathbf{ACIV(V; \emptyset)} \uparrow$ \\
        \midrule
        Linear Regression (LR)  & 0.834 & 0.775 &  0.039 & 0.086 \\
        Gradient Boosting (GBM) & 0.841 & 0.771 &  \textbf{0.026} & \textbf{0.108} \\
        Neural Network (NN)     & \textbf{0.853} & \textbf{0.789} &  0.036 & 0.0757 \\
        \bottomrule
    \end{tabular}
\end{table}

\paragraph{Results.}
The results are shown in \Cref{tab:aciv-estimation}.
We can see that the GBM estimator achieves the best performance in terms of ECE, while the NN estimator achieves the best performance in terms of Brier Score and Accuracy.
This validates our theoretical result: ECE is a good proxy for the regret of the estimation of the ACIV by \Cref{alg:aciv} compared to the Brier Score and Accuracy.
}

\section{Robustness Analysis in Demonstration I}

\begin{figure*}[!h]
    \centering
    \includegraphics[width=\linewidth]{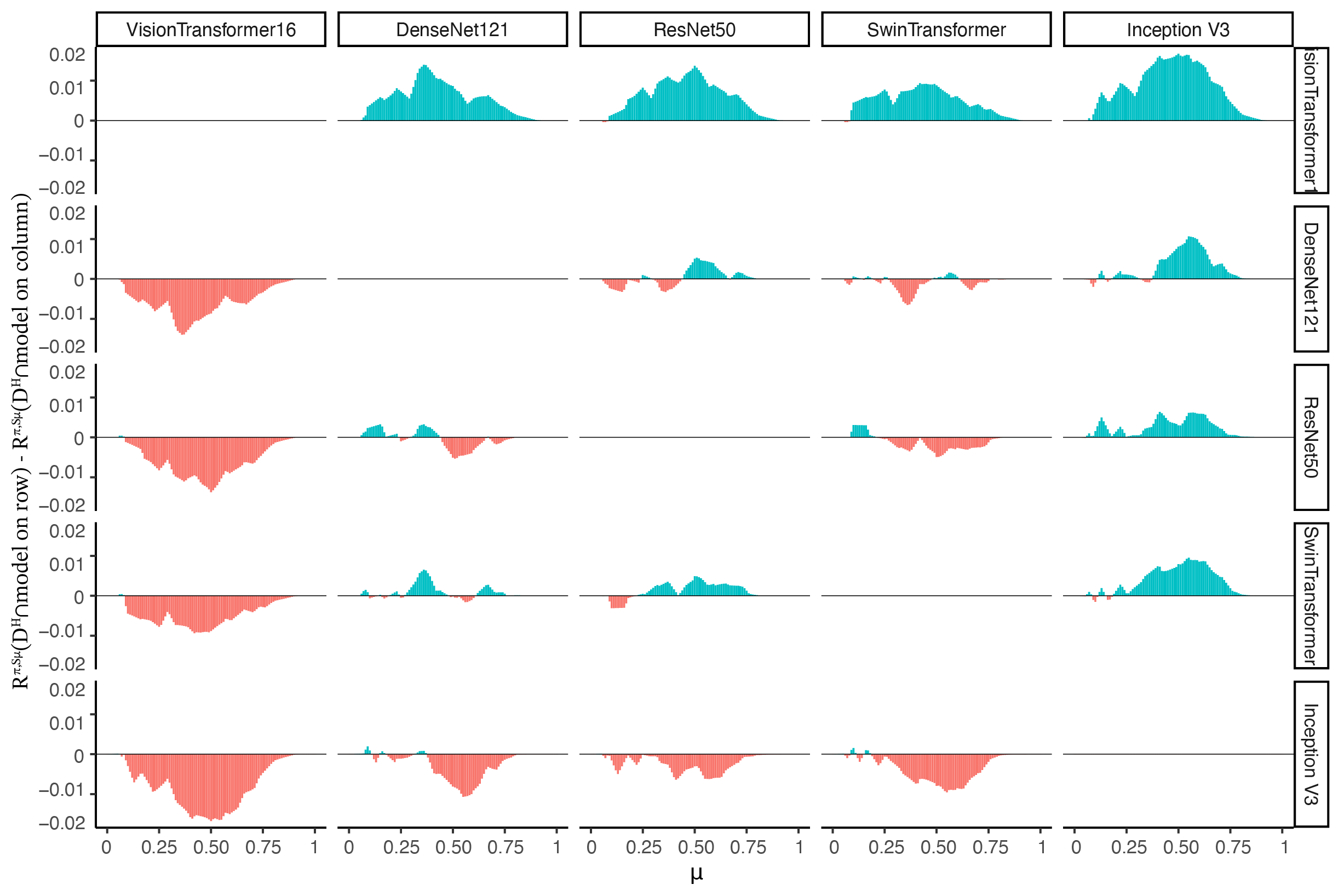}
    \caption{Robust analysis for experiment I on all V-shaped payoff functions. The kink $\mu$ is shown on the \textit{x}-axis. Each subplot displays the difference between the $\ACIV$ on the row model over human decisions and the $\ACIV$ on the column model over human decisions. A positive value (colored in blue) at $\kink$ indicates the model on the row contains more informative than the model on the column under the evaluation of V-shaped scoring rule with kink $\kink$. The subplots are symmetric along the diagonal, e.g., $(1,2)$ subplot and $(2, 1)$ subplot display the same distribution with opposite signs.}
    \label{fig:robust-analysis}
\end{figure*}




\section{Screenshots of the Experiment}
\label{app:screenshots}

\begin{figure}[h!]
    \centering
    \includegraphics[width=\textwidth]{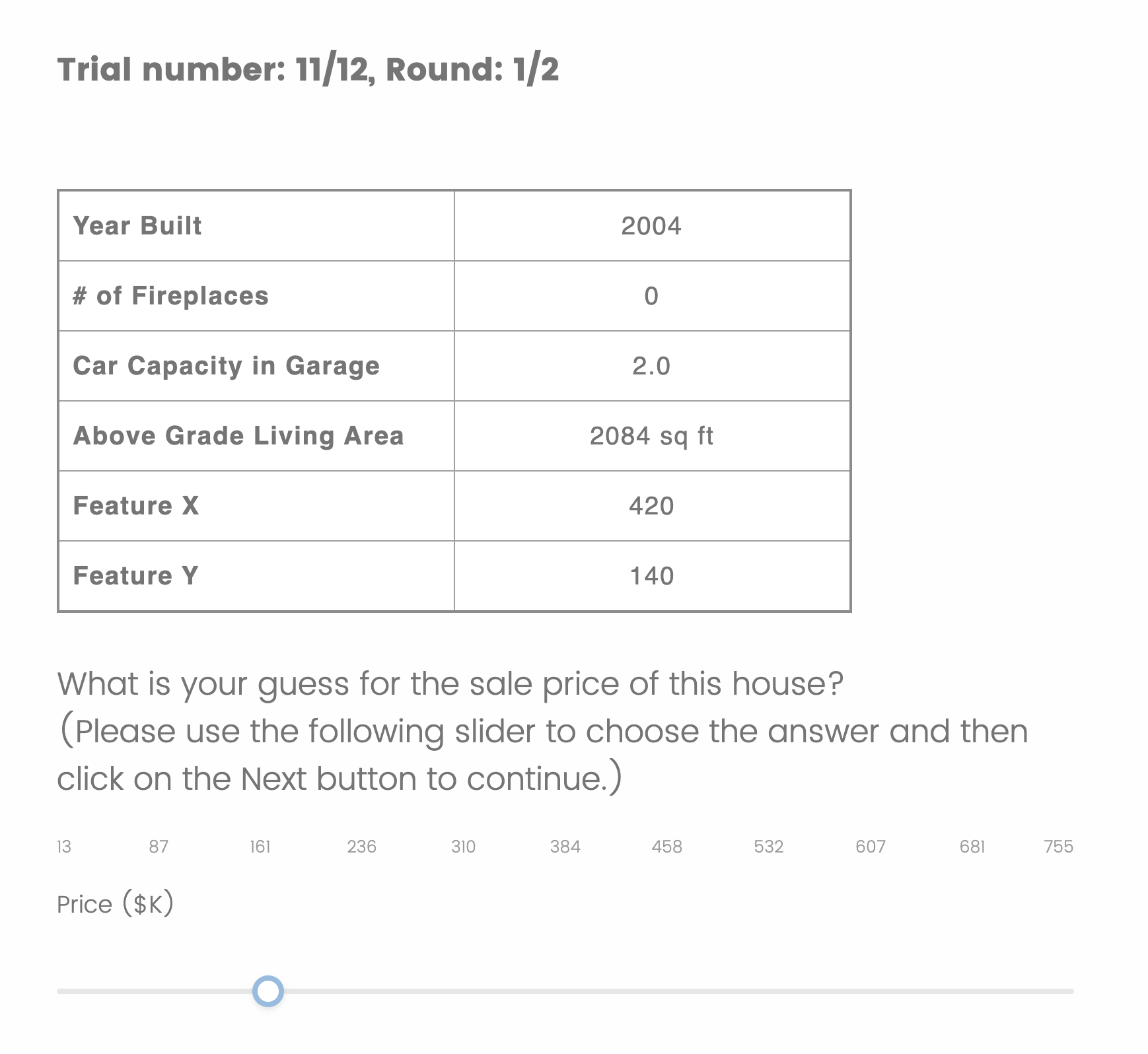}
    \caption{Screenshot of the human-alone trials in the first round of the experiment.}
    \label{fig:human-alone-interface}
\end{figure}

\begin{figure}[h!]
    \centering
    \includegraphics[width=\textwidth]{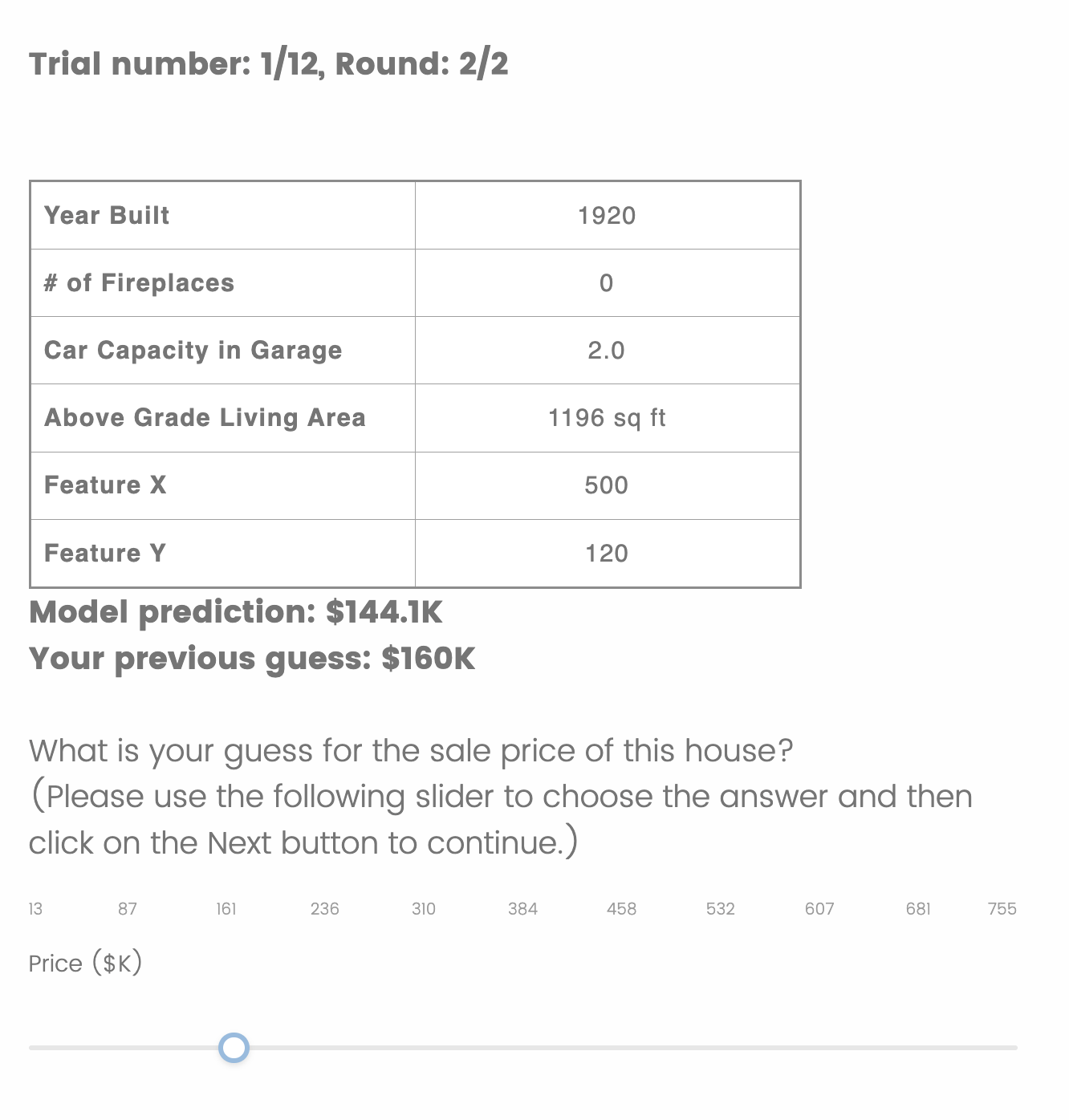}
    \caption{Screenshot of the no explanation condition in the second round of the experiment.}
    \label{fig:ai-interface}
\end{figure}

\begin{figure}[h!]
    \centering
    \includegraphics[width=\textwidth]{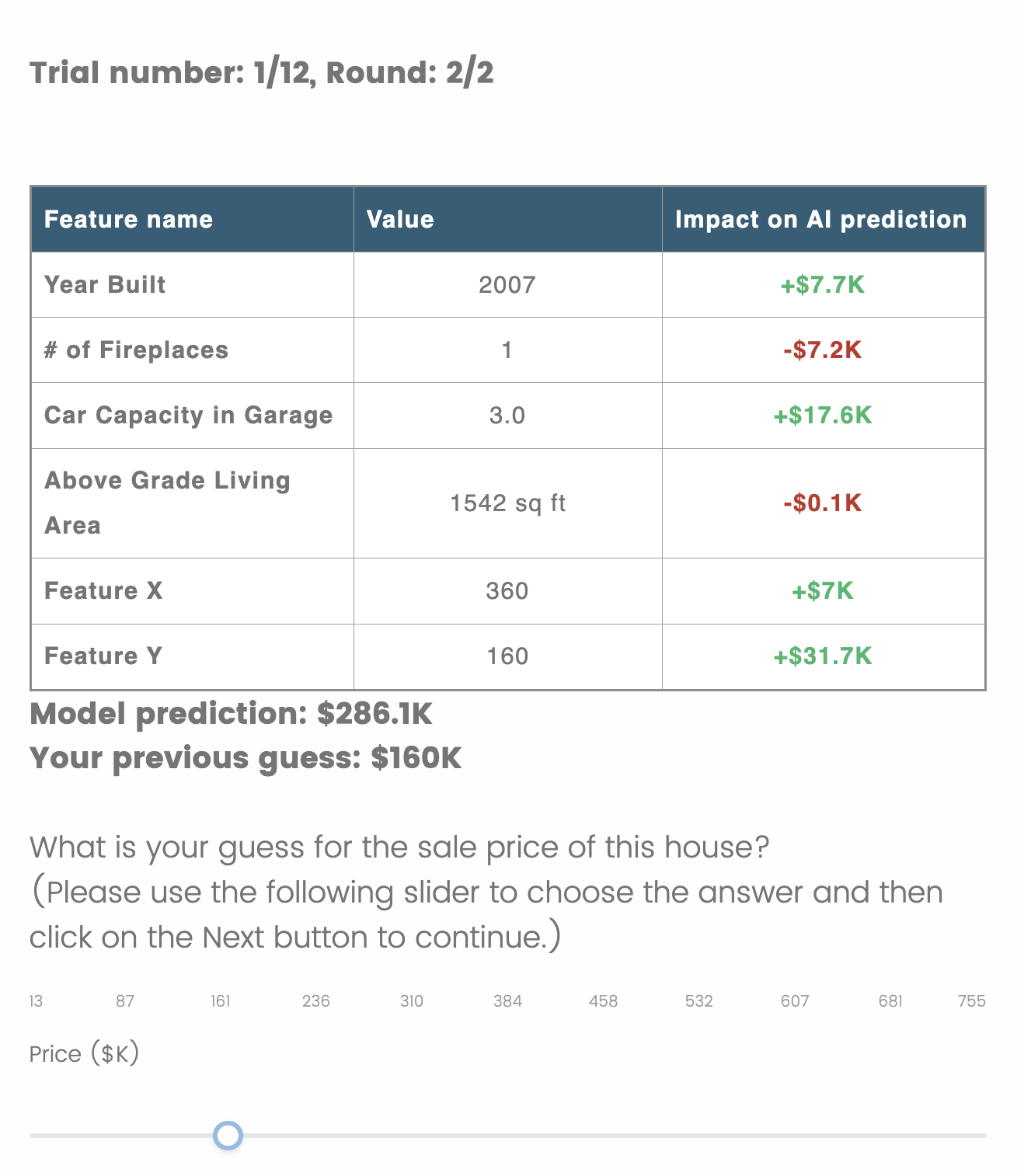}
    \caption{Screenshot of the SHAP condition of AI1 in the second round of the experiment.}
    \label{fig:ai1-shap-interface}
\end{figure}

\begin{figure}[h!]
    \centering
    \includegraphics[width=\textwidth]{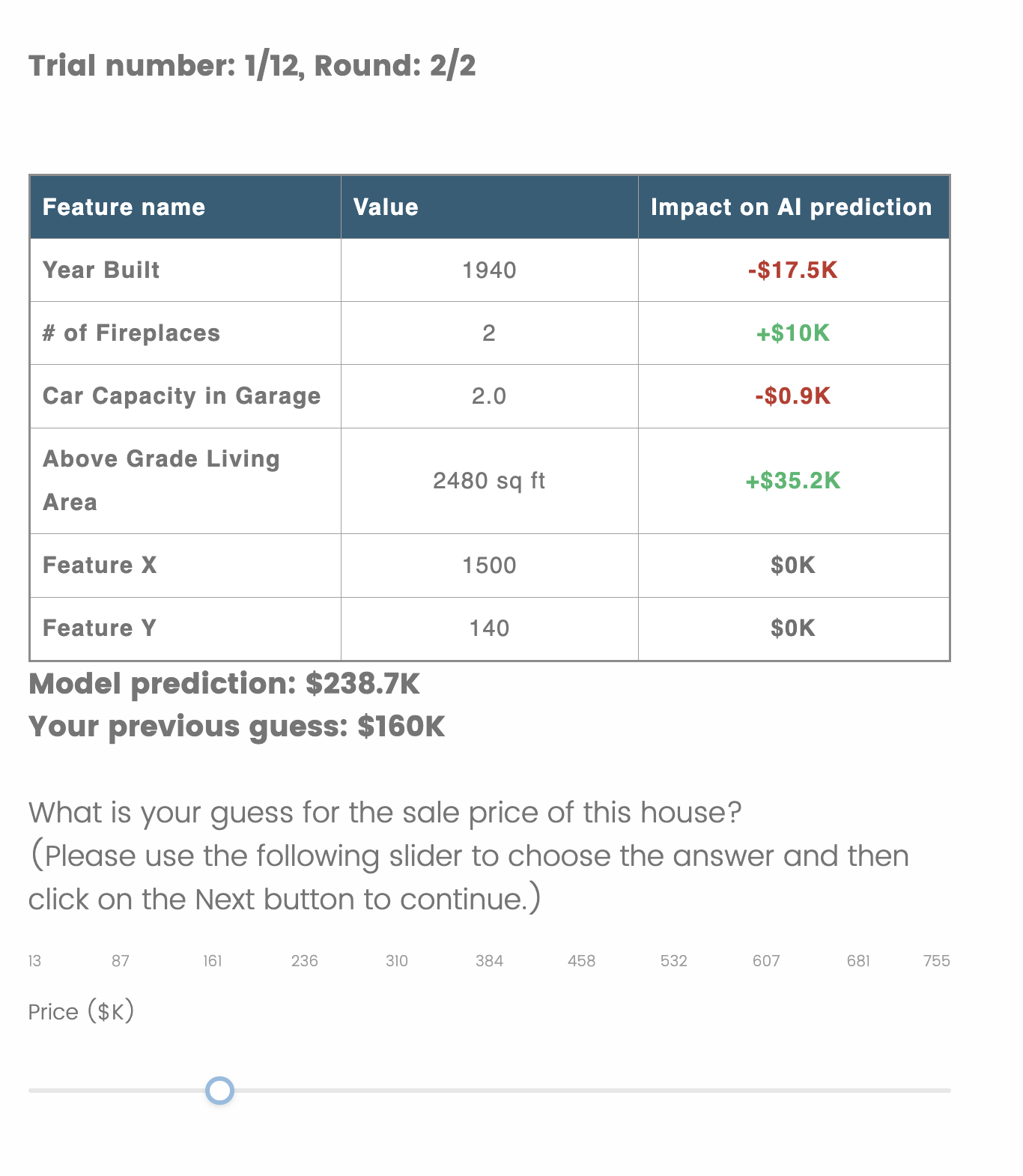}
    \caption{Screenshot of the SHAP condition of AI2 in the second round of the experiment.}
    \label{fig:ai2-shap-interface}
\end{figure}

\begin{figure}[h!]
    \centering
    \includegraphics[width=\textwidth]{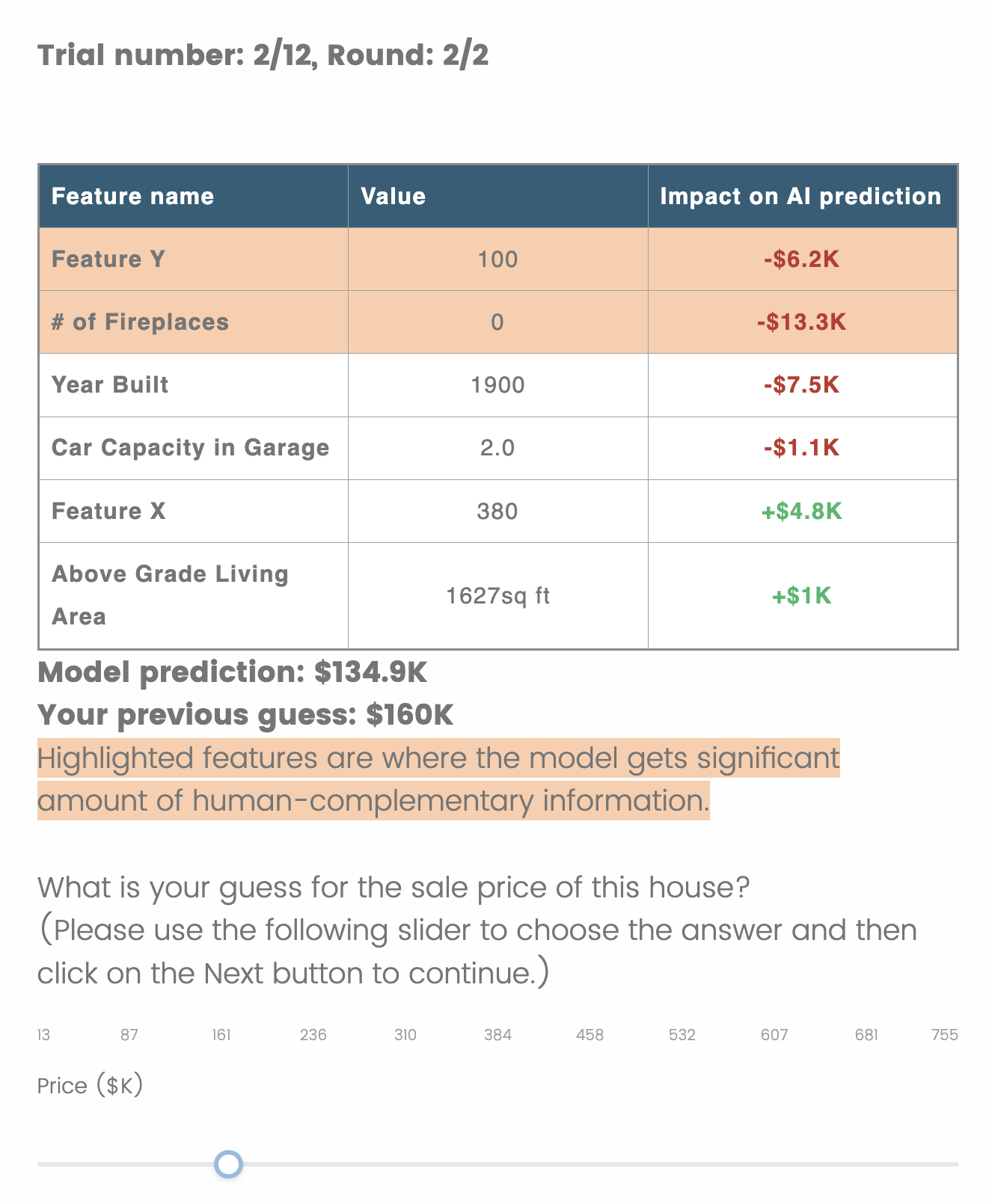}
    \caption{Screenshot of the ILIV-SHAP + SHAP condition of AI1 in the second round of the experiment.}
    \label{fig:ai1-both-interface}
\end{figure}

\begin{figure}[h!]
    \centering
    \includegraphics[width=\textwidth]{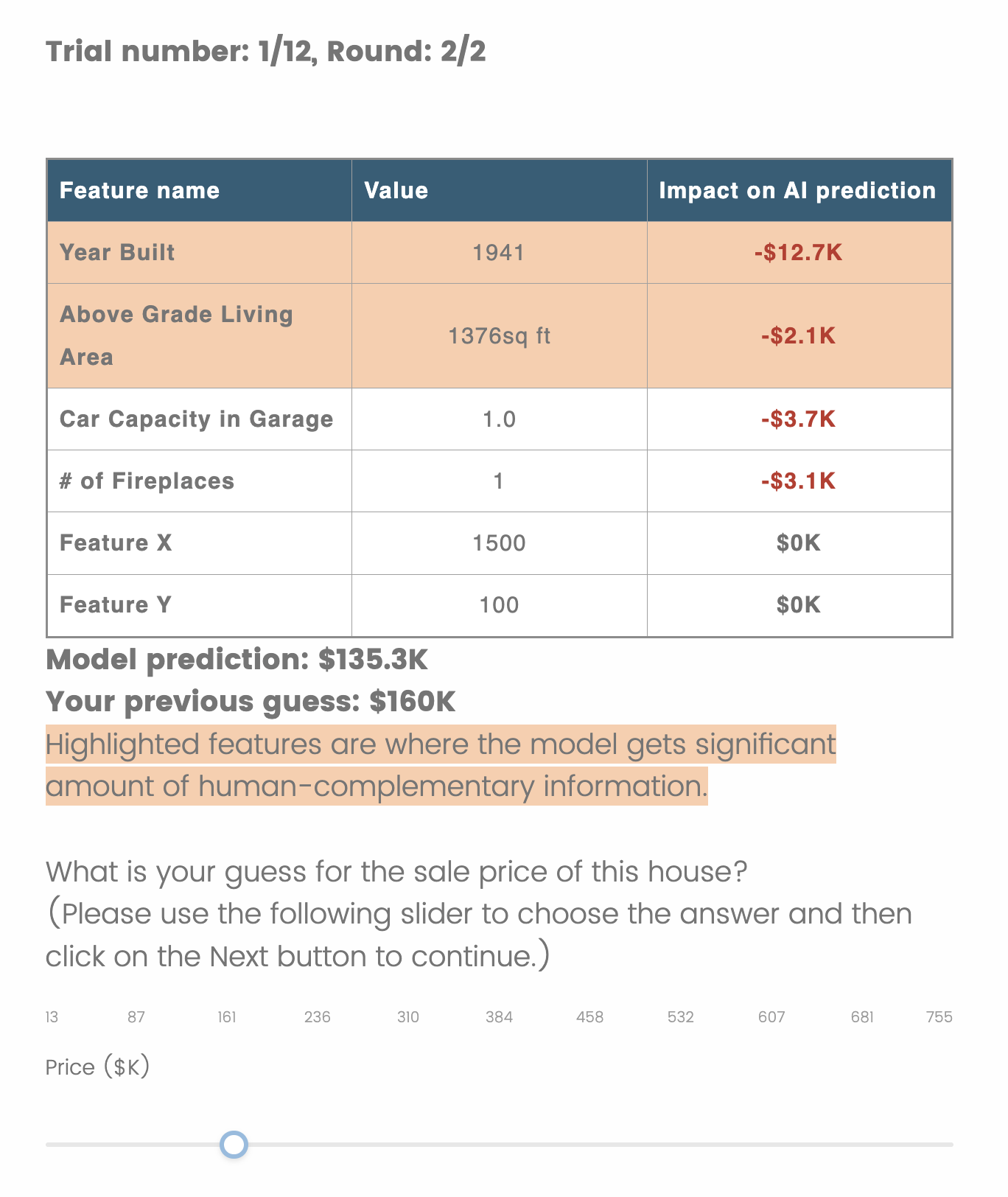}
    \caption{Screenshot of the ILIV-SHAP + SHAP condition of AI2 in the second round of the experiment.}
    \label{fig:ai2-both-interface}
\end{figure}






\end{document}